\newtheorem{theorem}{Theorem}[section]
\newtheorem{lemma}[theorem]{Lemma}
\newtheorem{proposition}[theorem]{Proposition}
\numberwithin{equation}{section}
\newcommand{\norm}[1]{\left|\left|#1\right|\right|}
\newcommand{\lr}[1]{\left(#1\right)}
\newcommand{\set}[1]{\left\{#1\right\}}
\newcommand{\E}[1]{\mathbb E\left[#1\right]}
\newcommand{\mN}{\mathcal N}
\def\Var{\mathrm{Var}}
\newcommand{\R}{\mathbb R}
\newcommand{\gives}{\rightarrow}
\DeclareMathOperator{\vol}{vol}
\DeclareMathOperator{\len}{len}
\title{Deep ReLU Networks Preserve Expected Length}
\author{%
  Boris Hanin\thanks{Equal contribution} \\
  Princeton University\\
  \texttt{bhanin@princeton.edu} \\
  \And
  Ryan Jeong\footnotemark[1] \\
  University of Pennsylvania\\
  \texttt{rsjeong@sas.upenn.edu} \\
  \And
  David Rolnick\footnotemark[1] \\
  McGill University \& Mila\\
  \texttt{drolnick@cs.mcgill.ca} \\  
}
\begin{document}

\maketitle

\begin{abstract} 
Assessing the complexity of functions computed by a neural network helps us understand how the network will learn and generalize. One natural measure of complexity is how the network distorts length -- if the network takes a unit-length curve as input, what is the length of the resulting curve of outputs? It has been widely believed that this length grows exponentially in network depth. We prove that in fact this is not the case: the expected length distortion does not grow with depth, and indeed shrinks slightly,  for ReLU networks with standard random initialization. We also generalize this result by proving upper bounds both for higher moments of the length distortion and for the distortion of higher-dimensional volumes. These  theoretical results are corroborated by our experiments.
\end{abstract}

\section{Introduction}

The utility of deep neural networks ultimately arises from their ability to learn functions that are sufficiently complex to fit training data and yet simple enough to generalize well. Understanding the precise sense in which functions computed by a given network have low or high complexity is therefore important for studying when the network will perform well. Despite the fundamental importance of this question, our mathematical understanding of the functions expressed and learned by different neural network architectures remains limited.

A popular way to measure the complexity of a neural network function is to compute how it distorts lengths. This may be done by considering a set of inputs lying along a curve and measuring the length of the corresponding curve of outputs. It has been claimed in prior literature that in a ReLU network this \emph{length distortion}  grows exponentially with the network's depth \cite{price2019trajectory,raghu2017expressive}, and this has been used as a justification of the power of deeper networks. We prove that, in fact, for networks with the typical initialization used in practice, expected length distortion does not grow at all with depth.

Our main contributions are:
\begin{enumerate}
    \item We prove that for ReLU networks initialized with the usual $2/\text{fan-in}$ weight variance, the expected length distortion does not grow with depth at initialization, actually decreasing slightly with depth (Thm.~\ref{T:1d-informal}) and exhibiting an interesting width dependency.
    \item We prove bounds on higher moments of the length distortion, giving upper bounds that hold with high probability (Thm.~\ref{T:moments}). We also obtain similar results for the distortion in the volume of higher-dimensional manifolds of inputs (Thm.~\ref{T:high-d-informal}).
    \item We empirically verify that our theoretical results accurately predict observed behavior for networks at initialization, while previous bounds are loose and fail to capture subtle architecture dependencies.
\end{enumerate}

It is worth explaining why our conclusions differ from those of \cite{price2019trajectory,raghu2017expressive}. First, prior authors prove only \emph{lower bounds} on the expected length distortion, while we use different methodology to calculate tight upper bounds, allowing us to say that the expected length distortion \emph{does not grow} with depth. As we show in Thm.~\ref{T:1d} and Fig.~\ref{fig:mean-depth}, the prior bounds are in fact quite loose.

Second, Thm.~3(a) in \cite{raghu2017expressive} has a critical typo\footnote{We have confirmed this in personal correspondence with the authors, and it is simple to verify -- the typo arises in the last step of the proof given in the Supplementary Material of \cite{raghu2017expressive}.}, which has unfortunately been perpetuated in Thm.~1 of \cite{price2019trajectory}. Namely, the ``2'' was omitted in the following statement (paraphrased from the original): $$\E{\text{length distortion}} = \Omega\bigg[\left(\frac{\sigma_w\sqrt{\text{width}}}{2\sqrt{\text{width}+1}}\right)^\text{depth}\bigg],$$where $\sigma_w^2/\text{width}$ is the variance of the weight distribution. As we prove in Thm.~\ref{T:1d-informal}, leaving out the 2  makes the statement false, incorrectly suggesting that length distortion explodes with depth for the standard He initialization $\sigma_w = \sqrt{2}$.

Finally, prior authors drew the conclusion that length distortion grows exponentially with depth by considering the behavior of ReLU networks with unrealistically large weights. If one multiplies by $C$ the weights and biases of a ReLU network, one multiplies the length distortion by $C^\text{depth}$, so it should come as no surprise that there exist settings of the weights for which the distortion grows exponentially with depth (or where it decays exponentially). The value of our results comes in analyzing the behavior specifically at He initialization ($\sigma_w = \sqrt{2}$) \cite{he2015delving}. This initialization is the one used in practice for ReLU networks, since this is the weight variance that must be used if the outputs \cite{hanin2018start} and gradients \cite{hanin2018neural} are to remain well-controlled. In the present work, we show that this is also the correct initialization for the expected length distortion to remain well-behaved.

\section{Related Work}
A range of complexity measures for functions computed by deep neural networks have been considered in the literature, dividing the prior work into at least three categories.  In the first, the emphasis is on \emph{worst-case} (or \emph{best-case}) scenarios -- what is the maximal possible complexity of functions computed by a given network architecture. These works essentially study the expressive power of neural networks and often focus on showing that deep networks are able to express functions that cannot be expressed by shallower networks. For example, it has been shown that it is possible to set the weights of a deep ReLU network such that the number of linear regions computed by the network grows exponentially in the depth \cite{daniely2017depth, eldan2016power, montufar2014number, telgarsky2015representation, telgarsky2016benefits}. Other works consider the degree of polynomials approximable by networks of different depths \cite{lin2017does, rolnick2018power} and the topological invariants of networks \cite{bianchini2014complexity}. 

While such work has sometimes been used to explain the utility of different neural network architectures (especially deeper ones), a second strand of prior work has shown that a significant gap can exist between the functions expressible by a given architecture and those which may be learned in practice. Such \emph{average-case} analyses have indicated that some readily expressible functions are provably difficult to learn \cite{shalev2017failures} or vanishingly unlikely for random networks \cite{hanin2019finite, hanin2019complexity,hanin2019deep}, and that some functions learned more easily by deep architectures are nonetheless expressible by shallow ones \cite{ba2014deep}. (While here we consider neural nets with ReLU activation, it is worth noting that for arithmetic circuits, worst-case and average-case scenarios may be more similar -- in both scenarios, a significant gap exists between the matricization rank of the functions computed by deep and shallow architectures \cite{cohen2016expressive}.) As noted earlier, average-case analyses in \cite{price2019trajectory,raghu2017expressive} provided lower bounds on expected length distortion, while \cite{poole2016exponential} presented a similar analysis for the curvature of output trajectories.

 Finally, a number of authors have sought complexity measures that either empirically or theoretically correlate with generalization \cite{jiang2019fantastic}. Such measures have been based on classification margins \cite{bartlett2017spectrally}, network compressibility \cite{arora2018stronger}, and PAC-Bayes considerations \cite{dziugaite2017computing}.

\section{Expected Length Distortion}

\subsection{Motivation}
While neural networks are typically overparameterized and trained with little or no explicit regularization, the functions they learn in practice are often able to generalize to unseen data. This phenomenon indicates an implicit regularization that causes these learned functions to be surprisingly simple. 

\begin{figure}[ht]
    \centering
    \includegraphics[width=\textwidth]{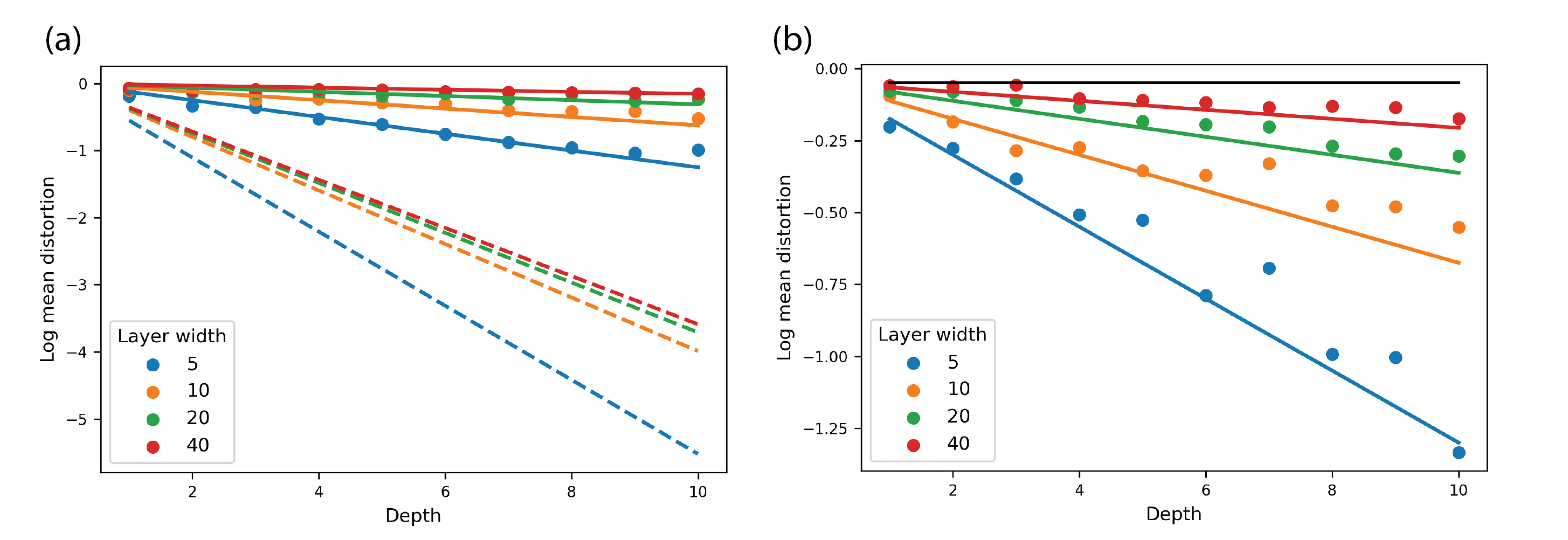}
    \caption{Mean length distortion as a function of depth, for randomly initialized ReLU networks of varying architecture. As described in Thm.~\ref{T:1d-informal}, distortion not only fails to grow, but shrinks with depth, especially for networks of small width. (a) compares the predictions of our Thm.~\ref{T:1d} (solid lines) to the lower bounds proven in prior work \cite{raghu2017expressive} (dashed lines) and the true empirical means (colored dots), which closely track our predictions. (b) zooms in on the upper part of (a), showing the empirical mean length distortion as a function of depth for different widths. The horizontal black line is $y = \frac{\Gamma(3)}{\Gamma(2.5)\sqrt{2.5}}$, the mean length distortion we predict when $n_0 = n_L$ in the limit of infinite width for any fixed depth. In each network, width is constant in all hidden layers, while input and output dimension are both $5$. The input curve is a fixed line segment of unit length. For each architecture, length distortion is calculated for $500$ different initializations of the weights and biases of the network (with variance of the weights given by $2/\text{fan-in}$). For further experimental details, see Appendix \ref{S:exp-details}.
    }
    \label{fig:mean-depth}
\end{figure}

Why this occurs is not well-understood theoretically, but there is a high level intuition. In a nutshell, randomly initialized neural networks will compute functions of low complexity. Moreover, as optimization by first order methods is a kind of greedy local search, network training is attracted to minima of the loss that are not too far from the initialization and hence will still be well-behaved. 

While this intuition is compelling, a key challenge in making it rigorous is to devise appropriate notions of complexity that are small throughout training. The present work is intended to provide a piece of the puzzle, making precise the idea that, at the start of training, neural networks compute tame functions. Specifically, we demonstrate that neural networks at initialization have low distortion of length and volume, as defined in \S \ref{S:def}. 

An important aspect of our analysis is that we study networks in typical, average-case scenarios. A randomly initialized neural network could, in principle, compute any function expressible by a network of that architecture, since the weights might with low probability take on any set of values. Some settings of weights will lead to functions of high complexity, but these settings may be unlikely to occur in practice, depending on the distribution over weights that is under consideration. As prior work has emphasized \cite{price2019trajectory,raghu2017expressive}, atypical distributions of weights can lead to exponentially high length distortion. We show that, in contrast, for deep ReLU networks with the standard initialization, the functions computed have low distortion in expectation and with high probability. 

\subsection{Definitions}\label{S:def}
Let $L\geq 1$ be a positive integer and fix positive integers $n_0,\ldots, n_{L}$. We consider a fully connected feed-forward ReLU network $\mN$ with input dimension $n_0$, output dimension $n_{L}$, and hidden layer widths $n_1,\ldots, n_{L-1}$. Suppose that the weights and biases of $\mN$ are \textit{independent} and Gaussian with the weights $W_{ij}^{_{(\ell)}}$ between layers $\ell-1$ and $\ell$ and biases $b_j^{_{(\ell)}}$ in layer $\ell$ satisfying:
\begin{equation}\label{E:init}
    W_{ij}^{(\ell)} \sim G\lr{0,2/n_{\ell-1}},\qquad b_j^{(\ell)}\sim G(0,C_b). 
\end{equation}
Here, $C_b> 0$ is any fixed constant and $G(\mu,\sigma^2)$ denotes a Gaussian with mean $0$ and variance $\sigma^2$. For any $M\subseteq \R^{n_0}$, we denote by $\mN(M)\subseteq \R^{n_{L}}$ the (random) image of $M$ under the map $x\mapsto \mN(x)$. Our primary object of the study will be the size of the output $\mN(M)$ relative to that of $M$. Specifically, when $M$ is a $1$-dimensional curve, we define
\[
\textit{length distortion} = \frac{\len(\mN(M))}{\len(M)}.
\]
Note that while \emph{a priori} this random variable depends on $M$, we will find that its statistics depend only on the architecture of $\mN$ (see Thms.~\ref{T:1d-informal} and \ref{T:1d}).

\subsection{Results}
\label{S:1d-mean-results}

We prove that the expected length distortion does not grow with depth -- in fact, it slightly decreases. Our result may be informally stated thus:
\begin{theorem}[Length distortion: Mean]\label{T:1d-informal}
Consider a ReLU network of depth $L$, input dimension $n_0$, output dimension $n_L$, and hidden layer widths $n_{\ell}$, with weights given by standard He normal initialization \cite{he2015delving}. The expected length distortion is upper bounded by $\sqrt{n_L/n_0}$. More precisely:
\[\E{\text{length distortion}} \approx C\left(\frac{n_L}{n_0}\right)^{1/2}\exp\left[-\frac{5}{8}\sum_{\ell=1}^{L-1}\frac{1}{n_{\ell}}\right],\]where $C$ is a constant close to $1$ depending only on the output dimension $n_L$.
\end{theorem}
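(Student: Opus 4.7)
The plan is to reduce the length distortion to a Jacobian-vector computation via Fubini, then exploit the layered structure to get a clean product formula.

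\emph{Fubini reduction.} Parametrize $M$ by arclength as $\gamma\colon[0,T]\to\R^{n_0}$, $T=\len(M)$, so $\abs{\gamma'(t)}=1$. Since $\mN$ is piecewise linear, almost surely $\mN\circ\gamma$ is smooth off a finite set and
\[
\len(\mN(M)) \;=\; \int_0^T \abs{J_{\mN}(\gamma(t))\,\gamma'(t)}\,dt.
\]
Taking expectations and swapping, it suffices to compute $\E{\abs{J_{\mN}(x)\,v}}$ for a deterministic input $x$ and unit vector $v$.

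\emph{Per-layer distributional identity.} Set $u^{(0)}=v$, $z^{(0)}=x$ and propagate $u^{(\ell)}=D^{(\ell)}W^{(\ell)}u^{(\ell-1)}$, $z^{(\ell)}=\mathrm{ReLU}(W^{(\ell)}z^{(\ell-1)}+b^{(\ell)})$ through the hidden layers, with $D^{(\ell)}$ the diagonal of ReLU gates. The heart of the proof is to show that, conditional on $(u^{(\ell-1)},z^{(\ell-1)})$,
\[
\abs{u^{(\ell)}}^2 \;\stackrel{d}{=}\; \frac{2\,\abs{u^{(\ell-1)}}^2}{n_{\ell-1}}\,S_\ell,\qquad S_\ell \;:=\; \sum_{i=1}^{n_\ell} B_i Z_i^2,
\]
with $B_i\sim\mathrm{Bernoulli}(1/2)$ and $Z_i\sim N(0,1)$ all mutually independent. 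To see this, decompose each row $W_i^{(\ell)}$ along $z^{(\ell-1)}/\abs{z^{(\ell-1)}}$ and its orthogonal complement, noting $(W_i^{(\ell)}\cdot u^{(\ell-1)})^2=\frac{2\abs{u^{(\ell-1)}}^2}{n_{\ell-1}}\tilde Z_i^2$ for some standard Gaussian $\tilde Z_i$; then the parity symmetry $(W_i^{(\ell)}\!\cdot z^{(\ell-1)},b_i^{(\ell)})\mapsto -(W_i^{(\ell)}\!\cdot z^{(\ell-1)},b_i^{(\ell)})$ shows by direct moment matching that $(D_{ii}^{(\ell)},\tilde Z_i^2)$ has the same joint law as $(B_i,Z_i^2)$ with $B_i\perp Z_i$---crucially, independently of the angle between $u^{(\ell-1)}$ and $z^{(\ell-1)}$. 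The linear output layer gives the analogous identity with $S_L$ replaced by $T_L\sim\chi^2_{n_L}$.

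\emph{Factoring and moments.} Since $S_\ell$ is a function only of the layer-$\ell$ weights and biases, it is independent of the past, so by iterated conditioning
\[
\E{\abs{u^{(L)}}} \;=\; \abs{v}\,\prod_{\ell=1}^{L-1}\sqrt{2/n_{\ell-1}}\;\E{\sqrt{S_\ell}}\;\cdot\;(\text{final-layer factor}).
\]
From $\E{BZ^2}=1/2$ and $\E{B^2Z^4}=3/2$ we compute $\E{S_\ell}=n_\ell/2$ and $\Var(S_\ell)=5n_\ell/4$, and a second-order Taylor expansion of $\sqrt{\cdot}$ around the mean gives
\[
\E{\sqrt{S_\ell}} \;=\; \sqrt{n_\ell/2}\Bigl(1-\frac{5}{8n_\ell}+O(n_\ell^{-2})\Bigr) \;\approx\; \sqrt{n_\ell/2}\,\exp\!\bigl(-5/(8n_\ell)\bigr).
\]
Hence $\sqrt{2/n_{\ell-1}}\,\E{\sqrt{S_\ell}}\approx\sqrt{n_\ell/n_{\ell-1}}\,\exp(-5/(8n_\ell))$, and telescoping across $\ell=1,\ldots,L-1$ collapses the product to $\sqrt{n_{L-1}/n_0}\,\exp\!\bigl(-\frac{5}{8}\sum_{\ell=1}^{L-1}1/n_\ell\bigr)$. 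The final-layer factor (built from $\E{\chi_{n_L}}=\sqrt{2}\,\Gamma((n_L+1)/2)/\Gamma(n_L/2)$) absorbs the remaining $\sqrt{n_L/n_{L-1}}$ and supplies the dimension-dependent constant $C=C(n_L)$, which is at most $1$ and tends to $1$ as $n_L\to\infty$ by Gautschi's inequality. The upper bound $\sqrt{n_L/n_0}$ follows.

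\emph{Main obstacle.} The technical crux is the angle-independence in the per-layer identity: establishing that the pair $(D_{ii}^{(\ell)},(W_i^{(\ell)}\cdot u^{(\ell-1)})^2)$ decouples into an independent Bernoulli$(1/2)$ and $\chi^2_1$ regardless of the relative geometry of $u^{(\ell-1)}$ and $z^{(\ell-1)}$. Without this angular decoupling, the law of the next-layer scaling would couple to the angle between $u^{(\ell-1)}$ and $z^{(\ell-1)}$, a quantity that itself evolves stochastically through the depth, and the expected norm would no longer factor as a clean product over layers.
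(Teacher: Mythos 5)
Your proposal follows essentially the same route as the paper's proof: reduce $\E{\len(\mN(M))}$ to $\E{\norm{J_xu}}$ for a single point and unit vector via arclength parametrization and Tonelli (the paper's Lemma~\ref{L:len-moments}), factor $\norm{J_xu}^2$ into independent per-layer scaled chi-squared variables with $\mathrm{Bin}(n_\ell,1/2)$ degrees of freedom (the paper's Proposition~\ref{L:chi-squared}, obtained there from Proposition 2 of \cite{hanin2019products} together with Lemma~\ref{L:indep}), and extract the $\Gamma$-function constant and the $-\tfrac{5}{8}\sum_\ell n_\ell^{-1}$ correction by the same second-order expansion of $\E{\sqrt{S_\ell}}$ about its mean (your $\Var(S_\ell)=5n_\ell/4$ matches the paper's law-of-total-variance computation exactly). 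One repair to the step you correctly identify as the crux: the involution that decouples the gate from the Jacobian entry should be the \emph{full} sign flip $(W_i^{(\ell)},b_i^{(\ell)})\mapsto(-W_i^{(\ell)},-b_i^{(\ell)})$, which is measure-preserving, flips $D_{ii}^{(\ell)}$ almost surely, and fixes $(W_i^{(\ell)}\cdot u^{(\ell-1)})^2$ exactly; negating only the component of $W_i^{(\ell)}$ along $z^{(\ell-1)}$ (together with $b_i^{(\ell)}$) does \emph{not} preserve $(W_i^{(\ell)}\cdot u^{(\ell-1)})^2$ unless $u^{(\ell-1)}\perp z^{(\ell-1)}$, so the symmetry as you state it only gives the Bernoulli$(1/2)$ marginal, not the independence. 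With that one-line substitution your conditional, layer-by-layer argument is a correct self-contained derivation of the gate/weight decoupling that the paper instead cites from prior work, and the rest of your computation coincides with the paper's.
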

For a formal statement (including the exact values of constants) and proof, see Thm.~\ref{T:1d} and App.~\ref{S:1d-pf}.
\begin{wrapfigure}{l}{0.60\textwidth}
\begingroup
\vspace{-8pt}
\begin{minipage}{0.58\textwidth}
\includegraphics[width=\textwidth]{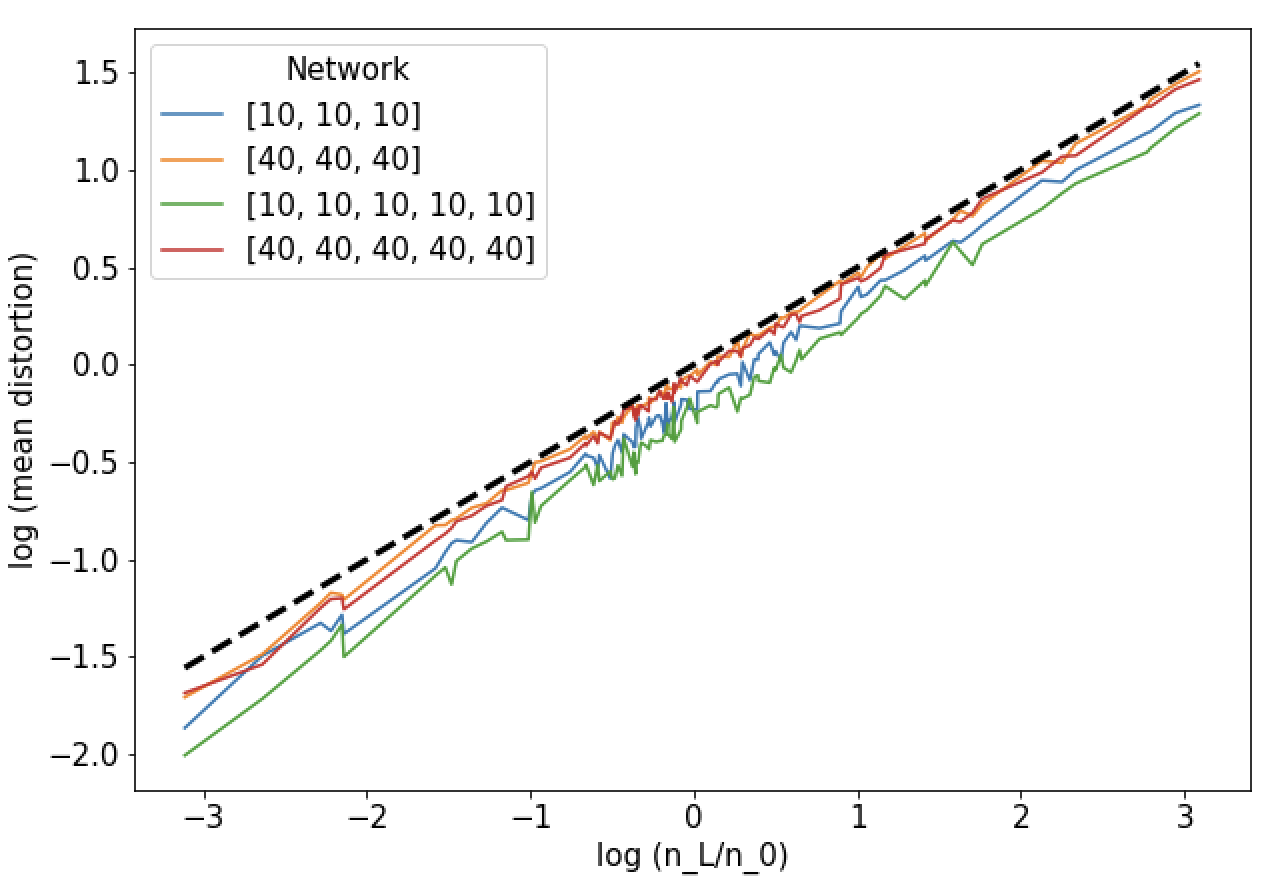}
\caption{Mean length distortion as a function of the ratio of output to input dimension, for ReLU networks with various architectures (e.g.~$[10,10,10]$ denotes three hidden layers, each of width 10). All networks are randomly initialized as in Figure \ref{fig:mean-depth}. We sample $100$ pairs of input dimension $n_0$ and output dimension $n_L$, each at most $50$, such that the ratio of output to input dimension is distinct for each such pair. For each pair, $200$ different network initializations are tested and the resulting length distortion is calculated; the log of the empirical mean is plotted. The dashed black line plots $\log(y) = \frac{1}{2}\log(x)$, the approximate prediction by Theorem \ref{T:1d-informal}.}
\label{fig:mean-ratio}
\end{minipage}
\vspace{-23pt}
\endgroup
\end{wrapfigure}

Our experiments align with the theorem. In Figure \ref{fig:mean-depth}, we compute the empirical mean of the length distortion for randomly initialized deep ReLU networks of various architectures with fixed input and output dimension. The figure confirms three of our key predictions: (1) the expected length distortion does not grow with depth, and actually decreases slightly; (2) the decrease happens faster for narrower networks (since $1/n_\ell$ is larger for smaller $n_\ell$); and (3) for equal input and output dimension $n_0=n_L$, there is an upper bound of 1 (in fact, the tighter upper bound of $C$ is approximately 0.9515 for $n_L=5$, as shown in the figure). The figure also shows the prior bounds in \cite{raghu2017expressive} to be quite loose. For further experimental details, see Appendix \ref{S:exp-details}.

In Fig.~\ref{fig:mean-ratio}, we instead fix the set of hidden layer widths, but vary input and output dimensions. The results confirm that indeed the expected length distortion grows as the square root of the ratio of output to input dimension.

Note that our theorem also applies for initializations other than He normal; the result in such cases is simply less interesting. Suppose we initialize the weights in each layer $\ell$ as i.i.d.~normal with variance $2c^2/n_{\ell-1}$ instead of $2/n_{\ell-1}$. This is equivalent to taking a He normal initialization and multiplying the weights by $c$. Multiplying the weights and biases in a depth-$L$ ReLU network by $c$ simply multiplies the output (and hence the length distortion) by $c^L$, so the expected distortion is $c^L$ times the value given in Thm.~\ref{T:1d-informal}. This emphasizes why an exponentially large distortion necessarily occurs if one sets the weights too large.

\subsection{Intuitive explanation}
\label{S:intuition}
The purpose of this section is to give an intuitive but technical explanation for why, in Theorem \ref{T:1d-informal}, we find that the distortion  $\len(\mN(M))/\len(M)$ of the length of a $1D$ curve $M$ under the neural network $\mN$ is typically not much larger than $1$, even for deep networks.

Our starting point is that, near a given point $x\in M$, the length of the image under $\mN$ of a small portion of $M$ with length $dx$ is approximately given by $\norm{J_x u}dx$, where $J_x$ is the input-output Jacobian of $\mN$ at the input $x$ and $u$ is the unit tangent vector to $M$ at $x$. In fact, in Lemma \ref{L:len-moments} of the Supp.~Material, we use a simple argument to give upper bounds on the moments of $\len(\mN(M))/\len(M)$ in terms of the moments of the norm $\norm{J_x u}$ of the Jacobian-vector product $J_xu$. 
\begin{wrapfigure}{l}{0.60\textwidth}
\begingroup
\vspace{-8pt}
\begin{minipage}{0.58\textwidth}
\includegraphics[width=\textwidth]{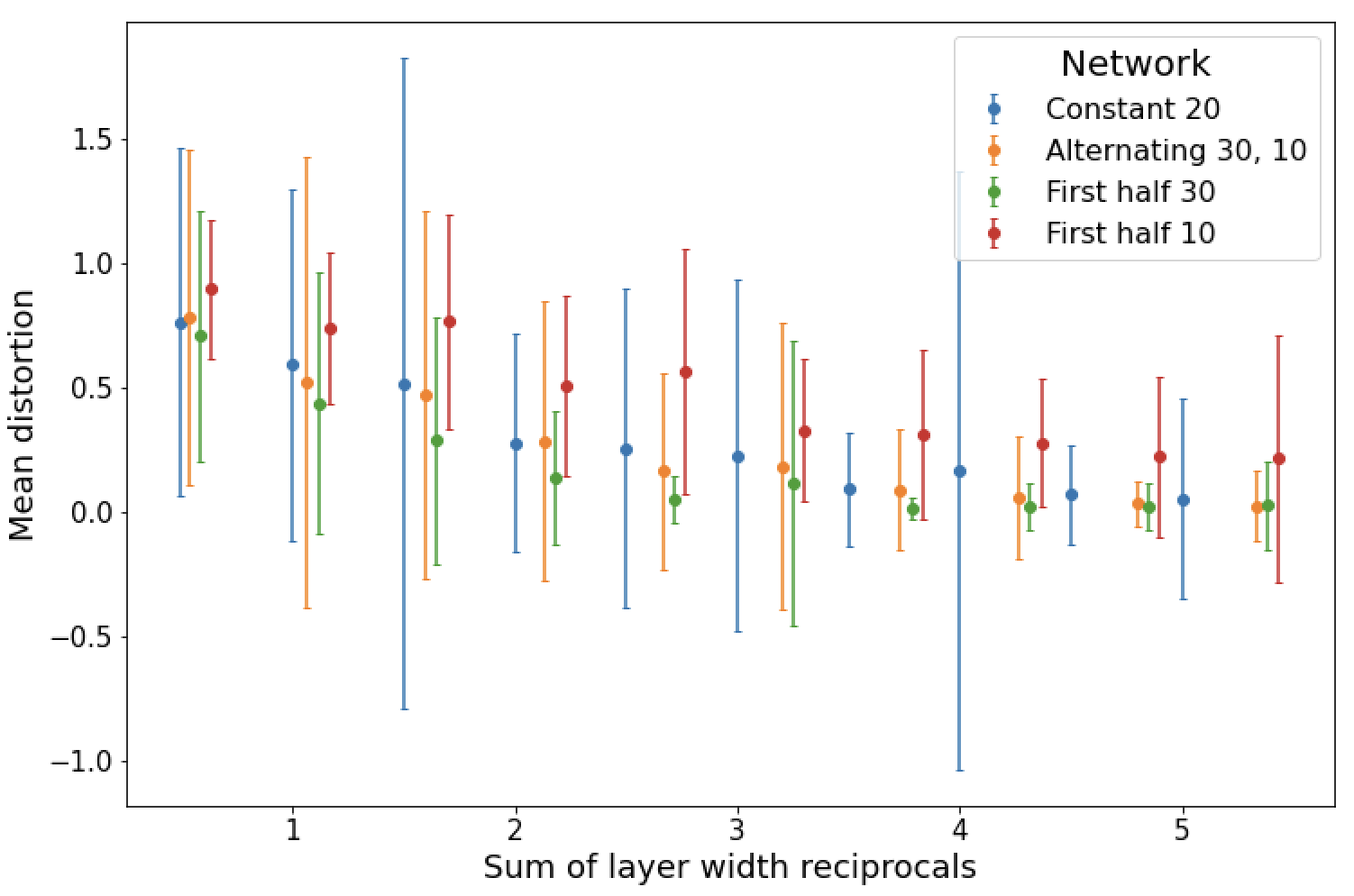}
\caption{Length distortion as a function of $\sum_{\ell=1}^L n_\ell^{-1}$, showing both mean and standard deviation across initializations. We test several types of network architecture -- with constant width 20, alternating between widths 30 and 10, and with the first (respectively, second) half of the layers of width 30 and the rest width 10. Each architecture type is tested for several depths. For each such network, we use $n_0 = n_L = 5$ and compute length distortion for $100$ initializations on a fixed line segment. As predicted in Thm.~\ref{T:moments}, the mean length distortion decreases with the sum of width reciprocals. Empirical standard deviation does not, in general, increase with greater depth, remaining modest throughout, as is consistent with the upper bound on variance in Thm.~\ref{T:moments}.}
\label{fig:recip-sums}
\end{minipage}
\vspace{-27pt}
\endgroup
\end{wrapfigure}

Thus, upper bounds on the length distortion reduce to studying the Jacobian $J_x$, which in a network with $L$ hidden layers can be written as a product $J_x = J_{L, x}\cdots J_{1,x}$ of the layer $\ell-1$ to layer $\ell$ Jacobians $J_{\ell,x}$. In a worst-case analysis, the left singular vectors of $J_{\ell,x}$ would align with the right singular vectors of $J_{\ell+1,x}$, causing the resulting product $J_x$ to have a largest singular value that grows exponentially with $L$. However, at initialization, this is provably not the case with high probability. Indeed, the Jacobians $J_{\ell,x}$ are independent (see Lemma \ref{L:indep}) and their singular vectors are therefore incoherent. We find in particular (Lemma \ref{L:chi-squared}) the following equality in distribution:
\[
\norm{J_xu} \stackrel{d}{=} \prod_{\ell=1}^L \norm{J_{\ell,x}e_1},
\]
where $e_1$ is the first standard unit vector and the terms in the product are independent. On average each term in this product is close to $1$:
\[
\E{\norm{J_{\ell,x}e_1}} = 1 +O(n_{\ell}^{-1}).
\]
This is a consequence of initializing weights to have variance $2/\text{fan-in}$. Put together, the two preceding displayed equations suggest writing
\begin{align*}
\norm{J_xu}&=\exp\left[\sum_{\ell=1}^L \log(\norm{J_{\ell,x}e_1})\right]
\end{align*}
The terms being summed in the exponent are independent and the argument of each logarithm scales like $1+O(n_\ell^{-1})$. With probability $1-\delta$ we have for all $\ell$ that $\log(\norm{J_{\ell,x}e_1}) \leq cn_\ell^{-1}$, where $c$ is a constant depending only on $\delta$. Thus, all together, 
\begin{align*}
\norm{J_xu}&\leq \exp\left[c\sum_{\ell=1}^L n_\ell^{-1} \right]
\end{align*}
with high probability. In particular, in the simple case where $n_\ell$ are proportional to a single large constant $n$, we find the typical size of $\norm{J_xu}$ is exponential in $L/n$ rather than exponential in $L$, as in the worst case. If $\mN$ is wider than it is deep, so that $L/n$ is bounded above, the typical size of $\norm{J_xu}$ and hence of $\len(\mN(M))/\len(M)$ remains bounded. Theorem \ref{T:1d} makes this argument precise. Moreover, let us note that articles like \cite{daniely2017depth,giryes2016deep,poole2016exponential} show low distortion of angles and volumes in very wide networks. Our results, however, apply directly to more realistic network widths.

\section{Further Results}
\subsection{Higher moments}
\label{S:higher-moment-results}

We have seen that the mean length distortion is upper-bounded by 1, and indeed by a function of the architecture that decreases with the depth of the network. However, a small mean is insufficient by itself to ensure that typical networks have low length distortion. For this, we now show that in fact all moments of the length distortion are well-controlled. Specifically, the variance is bounded above by the ratio of output to input dimension, and higher moments are upper-bounded by a function that grows very slowly with depth. Our results may be informally stated thus:
\begin{theorem}[Length distortion: Higher moments]\label{T:moments}
Consider, as before, a ReLU network of depth $L$, input dimension $n_0$, output dimension $n_L$, and hidden layer widths $n_{\ell}$, with weights given by He normal initialization. We have the following bounds on higher moments of the length distortion:
$$
\Var[\text{length distortion}] \le \frac{n_L}{n_0}\quad\text{and}\quad
\E{(\text{length distortion})^m} \le \lr{\frac{n_L}{n_0}}^{\frac{m}{2}}
\exp\left[cm^2\sum_{\ell=1}^L n_\ell^{-1}\right]
$$
for some universal constant $c>0.$
\end{theorem}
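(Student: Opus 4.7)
The plan is to reduce both bounds to moments of the Jacobian--vector product $\norm{J_x u}$, which by the discussion in \S\ref{S:intuition} factors as a product of independent layer-wise terms.

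First I would invoke Lemma \ref{L:len-moments} to bound moments of the length distortion in terms of $\E{\norm{J_x u}^m}$. Parametrizing $M$ by $\gamma:[0,1]\to \R^{n_0}$ gives $\len(\mN(M))=\int_0^1 \norm{J_{\gamma(t)}u(t)}\,\norm{\gamma'(t)}\,dt$ with $u(t)=\gamma'(t)/\norm{\gamma'(t)}$, so the length distortion is a weighted average of $\norm{J_{\gamma(t)}u(t)}$ with total weight $\len(M)$. Jensen's inequality applied to the convex function $x\mapsto x^m$ together with Fubini yields
\[
\E{(\text{length distortion})^m}\le \sup_{x,u}\E{\norm{J_x u}^m},
\]
and by Lemma \ref{L:indep} and rotational invariance of the He normal weights the right-hand side is independent of both $x$ and $u$.

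Next, Lemma \ref{L:chi-squared} gives $\norm{J_x u}\stackrel{d}{=}\prod_{\ell=1}^L \norm{J_{\ell,x}e_1}$ with independent factors, so $\E{\norm{J_x u}^m}=\prod_{\ell=1}^L \E{\norm{J_{\ell,x}e_1}^m}$. Since $J_{\ell,x}e_1$ is the first column of the layer-$\ell$ weight matrix with rows of inactive neurons zeroed out, a direct calculation gives $\E{\norm{J_{\ell,x}e_1}^2}=(2/n_{\ell-1})\E{k_\ell}=n_\ell/n_{\ell-1}$, where $k_\ell$ denotes the number of active units in layer $\ell$. The telescoping product then yields $\E{\norm{J_xu}^2}=n_L/n_0$, and hence $\Var[\text{length distortion}]\le \E{(\text{length distortion})^2}\le n_L/n_0$, which is the first claim.

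For general $m$, I would condition on $k_\ell$: conditionally, $\norm{J_{\ell,x}e_1}^2$ is $(2/n_{\ell-1})$ times a $\chi^2_{k_\ell}$ variable, so
\[
\E{\norm{J_{\ell,x}e_1}^m\,\big|\,k_\ell}=\lr{\frac{4}{n_{\ell-1}}}^{m/2}\frac{\Gamma((k_\ell+m)/2)}{\Gamma(k_\ell/2)}.
\]
Stirling bounds this by $(2k_\ell/n_{\ell-1})^{m/2}\exp(Cm^2/k_\ell)$. Taking the outer expectation over $k_\ell\sim\mathrm{Bin}(n_\ell,1/2)$, handling the event $\{k_\ell<n_\ell/4\}$ by its Chernoff-small probability, and using sub-Gaussianity of the binomial to bound $\E{k_\ell^{m/2}}\le (n_\ell/2)^{m/2}\exp(m^2/(8n_\ell))$ gives
\[
\E{\norm{J_{\ell,x}e_1}^m}\le (n_\ell/n_{\ell-1})^{m/2}\exp(cm^2/n_\ell).
\]
Multiplying over $\ell$ telescopes the base to $(n_L/n_0)^{m/2}$ and collects the corrections into $\exp[cm^2\sum_\ell n_\ell^{-1}]$, establishing the theorem.

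The main technical obstacle is controlling the ratio $\Gamma((k_\ell+m)/2)/\Gamma(k_\ell/2)$ uniformly in both $m$ and in the small values of $k_\ell$ lying in the binomial's lower tail: the clean Stirling expansion degrades once $m$ is comparable to $k_\ell$, so one must split the expectation into a typical event $\{k_\ell\ge n_\ell/4\}$, where Stirling is sharp, and a rare tail event whose binomially small probability is made to absorb the large Gamma ratio into the same $\exp(cm^2/n_\ell)$ factor.
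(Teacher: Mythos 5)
Your proposal tracks the paper's proof for most of its length: the reduction of moments of the length distortion to moments of $\norm{J_xu}$ is exactly Lemma \ref{L:len-moments} (the paper uses the power-mean inequality $\prod_i a_i\le\frac1m\sum_i a_i^m$ where you use Jensen on the unit-speed parameterization; these are equivalent here), the factorization of $\norm{J_xu}$ into a product of independent per-layer norms is exactly Proposition \ref{L:chi-squared}, and your second-moment computation $\E{\norm{J_xu}^2}=n_L/n_0$ giving the variance bound is the paper's argument essentially verbatim. Where you genuinely diverge is the higher-moment endgame. The paper never computes $\E{\norm{J_{\ell,x}e_1}^m}$ layer by layer via Gamma ratios; instead it bounds the whole product by $\prod_\ell \frac{2}{n_\ell}\chi^2_{{\bf n}_\ell}\le\exp\bigl[\sum_\ell\bigl(\frac{2}{n_\ell}\chi^2_{{\bf n}_\ell}-1\bigr)\bigr]$ using $x\le e^{x-1}$, observes that each summand is sub-exponential with parameters of order $n_\ell^{-1}$, and applies the sub-exponential moment-generating-function bound to the sum, valid for $m<c_1\min_\ell n_\ell$. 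Your alternative --- conditional chi-squared moments, then an expectation over ${\bf n}_\ell\sim\mathrm{Bin}(n_\ell,1/2)$ split into a typical event and a Chernoff-small tail --- is legitimate and arrives at the same per-layer factor $(n_\ell/n_{\ell-1})^{m/2}\exp(cm^2/n_\ell)$; it is more hands-on but also more transparent about where the $\exp(cm^2/n_\ell)$ correction comes from.

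Two caveats. First, the range of $m$: on the tail event the worst case ${\bf n}_\ell=1$ produces a factor $\exp(Cm^2)$ from your Gamma-ratio bound, and absorbing this into the Chernoff probability $\exp(-cn_\ell)$ forces $m\lesssim\sqrt{n_\ell}$, which is strictly narrower than the paper's range $m<c_1\min_\ell n_\ell$ (the restriction that appears in the formal statement, Theorem \ref{T:1d}, and which you should state explicitly --- neither proof gives the bound for unrestricted $m$). This is fixable within your scheme: on the tail event, replace the Stirling bound by the uniform estimate $\E{\lr{\chi_k^2}^{m/2}}\le\lr{\E{\lr{\chi_k^2}^{m}}}^{1/2}=\bigl(\prod_{j=0}^{m-1}(k+2j)\bigr)^{1/2}\le(k+2m)^{m/2}$, which for $k<n_\ell/4$ and $m\le n_\ell/8$ is at most $(n_\ell/2)^{m/2}$, and then the Chernoff factor alone closes the bound. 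Second, be careful with the final layer, which has no ReLU: its contribution is a full (non-thinned) chi-squared with $n_L$ degrees of freedom and a $2/n_{L-1}$ scaling, so the telescoping to $(n_L/n_0)^{m/2}$ requires tracking that normalization separately rather than treating layer $L$ as another $k_\ell$-thinned factor.
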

A formal statement and proof of this result are given in Theorem \ref{T:1d} and Appendix \ref{S:1d-pf}.

We consider the mean and variance of length distortion for a wide range of network architectures in Figure \ref{fig:recip-sums}. The figure confirms that variance is modest for all architectures and does not increase with depth, and that mean length distortion is consistently decreasing with the sum of layer width reciprocals, as predicted by Theorem \ref{T:moments}.

\subsection{Higher-dimensional volumes}
\label{S:higherd-results}

Another natural generalization to consider is how ReLU networks distort higher-dimensional volumes: Given a $d$-dimensional input $M$, the output $\mN(M)$ will in general also be $d$-dimensional, and we can therefore consider the \emph{volume distortion} $\vol_d(\mN(M))/\vol_d(M)$. The theorems presented in \S \ref{S:1d-mean-results} when $d=1$ can be extended to all $d$, and the results may be informally stated thus:

\begin{theorem}[Volume distortion]\label{T:high-d-informal}
Consider a ReLU network of depth $L$, input dimension $n_0$, output dimension $n_L$, and hidden layer widths $n_{\ell}$, with weights given by He normal initialization. Both the squared mean and the variance of volume distortion are upper-bounded by:
\[\lr{\frac{n_L}{n_0}}^{d}\exp\left[-\binom{d}{2}\sum_{\ell=1}^Ln_\ell^{-1}\right].\]
\end{theorem}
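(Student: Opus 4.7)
My plan mimics the one-dimensional intuition from Section \ref{S:intuition}, but tracks an entire orthonormal $d$-frame of tangent vectors through the network. First, reduce the volume distortion to a pointwise squared $d$-Jacobian via Cauchy--Schwarz; a single bound on the second moment then dominates both $(\E{\text{vol dist}})^2$ (by Jensen) and $\Var[\text{vol dist}]$ (since the variance is at most the second moment). Second, telescope the squared Jacobian across layers using a QR step, so that each layer contributes the $d$-volume of its layer Jacobian applied to a \emph{fresh} orthonormal frame. Third, evaluate each layer's expected contribution via a Gaussian moment computation, using Lemma \ref{L:indep}. Finally, combine the per-layer contributions with the telescoping identity $\prod_\ell (n_\ell/n_{\ell-1})^d = (n_L/n_0)^d$ and the elementary inequality $1-x \le e^{-x}$.

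\textbf{Details.} Fix $x\in M$ and an orthonormal tangent frame $U\in\R^{n_0\times d}$ for $T_xM$. By Cauchy--Schwarz, $\vol_d(\mN(M))^2 \le \vol_d(M)\int_M J_\mN(x)^2\,dx$ with $J_\mN(x)^2 = \det((J_xU)^T J_xU)$, so it suffices to bound $\E{J_\mN(x)^2}$ uniformly in $x$. Setting $V_0=U$ and $V_\ell = J_{\ell,x}V_{\ell-1}$, and performing a thin QR step $V_{\ell-1}=Q_{\ell-1}R_{\ell-1}$ at each layer (where $Q_{\ell-1}$ has orthonormal columns), one obtains
\[ J_\mN(x)^2 \;=\; \prod_{\ell=1}^L \det\!\bigl((J_{\ell,x}Q_{\ell-1})^T (J_{\ell,x}Q_{\ell-1})\bigr), \]
the higher-dimensional analogue of the factorisation $\norm{J_xu}^2 \stackrel{d}{=} \prod_\ell \norm{J_{\ell,x}e_1}^2$ used in Section \ref{S:intuition}. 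For each hidden layer $\ell<L$, write $J_{\ell,x}=D_\ell W^{(\ell)}$. Expanding via Cauchy--Binet and Leibniz and using the independence of the rows of $W^{(\ell)}$ collapses the conditional expectation to $(n_\ell)_d \det(M^{(\ell)})$, where $M^{(\ell)}_{ab} = \E{D_{ii}\,(W_i^{(\ell)}\!\cdot q_a)(W_i^{(\ell)}\!\cdot q_b)}$ and the $q_a$ are the columns of $Q_{\ell-1}$. Decomposing $W_i^{(\ell)}$ into components parallel and perpendicular to the preactivation $h^{(\ell-1)}$ --- the essential content of Lemma \ref{L:indep} --- and using the Gaussian identity $\E{\mathbf{1}_{X>0}X^2}=\tfrac12\Var(X)$ causes the cross terms from the parallel component to cancel exactly, leaving $M^{(\ell)} = (1/n_{\ell-1})I_d$. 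The per-layer expected factor is therefore $(n_\ell)_d/n_{\ell-1}^d$, independent of $Q_{\ell-1}$ and $h^{(\ell-1)}$; the output layer (no ReLU) contributes only an additional $d$-dependent constant. Applying the tower property from the output inward, the telescoping $\prod_\ell (n_\ell/n_{\ell-1})^d = (n_L/n_0)^d$ combines with the identity $(n_\ell)_d = n_\ell^d \prod_{j=1}^{d-1}(1-j/n_\ell)$ and the bound $\prod_{\ell,j}(1-j/n_\ell) \le \exp[-\binom{d}{2}\sum_\ell n_\ell^{-1}]$ coming from $1-x \le e^{-x}$ to give the claimed bound.

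\textbf{Main obstacle.} The one nontrivial step is the per-layer Gaussian identity $M^{(\ell)}=(1/n_{\ell-1})I_d$. For $d\ge 2$, both the activation diagonal $D_\ell$ and the transformed frame $W^{(\ell)}Q_{\ell-1}$ depend on the \emph{same} Gaussian matrix $W^{(\ell)}$, so the rotational-invariance shortcut that makes the $d=1$ case transparent no longer applies: one must explicitly decompose $W_i^{(\ell)}$ relative to $h^{(\ell-1)}$ and verify, by a precise Gaussian moment calculation, that the parallel component's contribution to the off-diagonal entries of $M^{(\ell)}$ vanishes. Once this cancellation is in hand, the remaining Cauchy--Binet assembly, tower telescoping and falling-factorial estimates are routine bookkeeping.
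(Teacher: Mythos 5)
Your proposal is correct in its overall architecture and arrives at the bound, but it takes a genuinely different route from the paper at the key step. The paper likewise reduces to the second moment of the pointwise Gram determinant and peels off one layer at a time, but it phrases the peeling in terms of wedge products and handles the dependence between the activation pattern $D^{(\ell)}$ and the weights $W^{(\ell)}$ by symmetrization: invoking Proposition 2 of \cite{hanin2019products}, it replaces $D^{(\ell)}$ by an independent diagonal of i.i.d.\ Bernoulli$(1/2)$ entries, then uses rotational invariance of the Gaussian (Lemma \ref{L:indep-gen}) to show that $\norm{J_xu_1\wedge\cdots\wedge J_xu_d}^2$ is equal \emph{in distribution} to a product of independent scaled chi-squared variables with binomial degrees of freedom (Proposition \ref{L:chi-squared-general}), after which the expectation is read off factor by factor. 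Your QR telescoping is the matrix form of that same wedge-product factorization, but instead of symmetrizing you compute the conditional per-layer expectation directly: Cauchy--Binet reduces it to $\binom{n_\ell}{d}d!\det(M^{(\ell)})$ for conditionally i.i.d.\ rows, and the parallel/perpendicular decomposition of $W_i^{(\ell)}$ relative to the layer input, together with $\E{\mathbf{1}_{X>0}X^2}=\tfrac12\E{X^2}$, shows $M^{(\ell)}$ is a multiple of the identity. You correctly identify this cancellation as the crux. The trade-off: the paper's route yields a full distributional identity (reused for the refined mean and higher moments in Theorem \ref{T:1d}) at the cost of importing the resampling lemma; your route is self-contained and touches only second moments, which is all this theorem needs, but would not by itself recover the refined mean formula.

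One loose end you should pin down is the output layer. Under \eqref{E:init} the last layer's weights have variance $2/n_{L-1}$ and are not followed by a ReLU, so your own accounting gives a final factor $\binom{n_L}{d}d!\,(2/n_{L-1})^d$, i.e.\ an extra $2^d$ beyond the telescoped $(n_L/n_0)^d$; dismissing this as ``an additional $d$-dependent constant'' does not establish that the product is bounded by the displayed quantity, since that constant exceeds $1$ unless the falling-factorial deficit compensates. (A closely related normalization question is latent in the paper itself: the last factor in Proposition \ref{L:chi-squared-general} is stated with scaling $\tfrac{1}{n_L}$, whereas the derivation via \eqref{E:chi-last} produces a $\tfrac{2}{n_{L-1}}$-scaled chi-squared.) You should either verify that your output-layer constant is at most one or carry it explicitly into the final bound.
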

A formal statement and proof of this result are given in Theorem \ref{T:high-d} and Appendix \ref{S:higher-d-pf}.

\section{Formal Statements}
\label{S:formal}
In this section, we provide formal statements of our results. Full proofs are given in the Supp.~Material.

\subsection{One-dimensional manifolds}\label{S:1d-results-formal} 
Fix a smooth one-dimensional manifold $M\subseteq \R^{n_0}$ (i.e. a curve) with unit length. Each point on $M$ represents a possible input to our ReLU network $\mN$, which we recall has input dimension $n_0$, output dimension $n_L$, and hidden layer widths $n_1,\ldots, n_{L-1}$. The function $x\mapsto \mN(x)$ computed by $\mN$ is continuous and piecewise linear. Thus, the image $\mN(M)\subseteq \R^{n_L}$ of $M$ under this map is a piecewise smooth curve in $\R^{n_L}$ with a finite number of pieces, and its length $\len(\mN(M))$ is a random variable. Our first result states that, provided $\mN$ is wider than it is deep, this distortion is small with high probability at initialization.
\begin{theorem}\label{T:1d}
Let $\mN$ be a fully connected ReLU network with input dimension $n_0,$ output dimension $n_L$, hidden layer widths $n_1,\ldots, n_{L-1}$, and independent centered Gaussian weights/biases with the weights having variance $2/\text{fan-in}$ (as in \eqref{E:init}). Let $M$ be a $1$-dimensional curve of unit length in $\R^{n_0}$. Then, the mean length $\E{\len(\mN(M))}$ equals
\begin{equation}
 \label{E:mean-1d}\lr{\frac{n_L}{n_0}}^{1/2}\frac{\Gamma\lr{\frac{n_L+1}{2}}}{\Gamma\lr{\frac{n_L}{2}}\lr{\frac{n_L}{2}}^{1/2}} \times\exp\bigg[-\frac{5}{8}\sum_{\ell=1}^{L-1}n_\ell^{-1} +O\lr{\sum_{\ell=1}^{L-1}n_{\ell}^{-2}}\bigg],
\end{equation}
where implied constants in $O(\cdot)$ are universal and $\Gamma(\cdot)$ denotes the Gamma function. Moreover:
\begin{equation}\label{E:1d-var}
\Var[\len(\mN(M))]\leq \frac{n_L}{n_0}.
\end{equation}
Finally, there exist universal constants $c_1,c_2>0$ such that if $m<c_1 \min\set{n_1,\ldots, n_{L-1}}$, then
\begin{equation}\label{E:high-moments-1}
\E{\lr{\len(\mN(M))}^m}\leq \lr{\frac{n_L}{n_0}}^{m/2}\exp\left[c_2 m^2 \sum_{\ell=1}^{L}n_\ell^{-1}\right].    
\end{equation}

\end{theorem}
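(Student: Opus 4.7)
The plan is to reduce the length distortion to the single-point random variable $R := \norm{J_x u}$ and then to exploit the layer-wise factorization of the input-output Jacobian $J_x$. Since $\mN$ is piecewise linear with finitely many linear regions, we have $\len(\mN(M)) = \int_M \norm{J_x u(x)}\, dx$ almost surely, where $u(x)$ is the unit tangent vector to $M$ at $x$. Lemma~\ref{L:len-moments} then upper bounds moments of $\len(\mN(M))$ by $\int_M \E{\norm{J_x u(x)}^m}\, dx$, with equality when $m=1$ via Fubini. Rotational invariance of the Gaussian weights and biases makes $\E{\norm{J_x u(x)}^m}$ independent of both $x$ and the direction of $u(x)$, so (using $\len(M)=1$) all three estimates reduce to moments of the single random variable $R$. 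Lemma~\ref{L:chi-squared} then factorizes $R \stackrel{d}{=} \prod_{\ell=1}^L R_\ell$ with $R_\ell := \norm{J_{\ell,x} e_1}$ and the $R_\ell$ independent, so $\E{R^m} = \prod_\ell \E{R_\ell^m}$.

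Each factor has an explicit distribution: $R_L$ is a scaled chi variable on $n_L$ degrees of freedom (producing the Gamma factor in \eqref{E:mean-1d}), while for each hidden layer $\ell$, conditional on a Binomial$(n_\ell,1/2)$ count $k_\ell$ of active ReLU neurons, $R_\ell^2$ is a scaled $\chi^2_{k_\ell}$. Combining Stirling's asymptotic $\Gamma((k+1)/2)/\Gamma(k/2) = \sqrt{k/2}\bigl(1 - \tfrac{1}{4k} + O(k^{-2})\bigr)$ with a second-order Taylor expansion of $\E{\sqrt{k_\ell}}$ around $n_\ell/2$ (using $\Var(k_\ell) = n_\ell/4$) yields $\E{R_\ell} = \sqrt{n_\ell/n_{\ell-1}}\bigl(1 - \tfrac{5}{8 n_\ell} + O(n_\ell^{-2})\bigr)$; taking the telescoping product through all layers and inserting the output Gamma factor produces \eqref{E:mean-1d}. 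The variance bound \eqref{E:1d-var} follows similarly from the elementary inequality $\Var[\len] \le \E{\len^2}$, which by Cauchy--Schwarz in the integral representation is dominated by $\E{R^2}$; this second moment telescopes cleanly across layers to the claimed value. For \eqref{E:high-moments-1}, I would write $\log R = \sum_\ell \log R_\ell$ and apply MGF estimates for sub-Gaussian sums: each $\log R_\ell = \tfrac12\log\bigl((2/n_{\ell-1})\chi^2_{k_\ell}\bigr)$ has mean $\tfrac12\log(n_\ell/n_{\ell-1}) + O(n_\ell^{-1})$ and, on the event that $k_\ell$ is close to its mean, sub-Gaussian fluctuations of scale $O(n_\ell^{-1/2})$, whence $\E{R^m} \le (n_L/n_0)^{m/2}\exp\bigl[c m^2 \sum_\ell n_\ell^{-1}\bigr]$.

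The main technical obstacle is this last step: the left tail of $\log R_\ell$ is heavy when $k_\ell$ is small, so the MGF of $\log R_\ell$ at positive $m$ is well-controlled only when $m$ is smaller than a constant multiple of the typical value $n_\ell/2$. The hypothesis $m < c_1\min\{n_1,\ldots,n_{L-1}\}$ is exactly what permits a Chernoff truncation on the $k_\ell$'s to absorb the small-$k_\ell$ event into the universal constant $c_2$. The mean-formula bookkeeping is also nontrivial: the constant $\tfrac58$ arises from combining a $-\tfrac{1}{4k_\ell}$ Stirling term (which contributes $-\tfrac{1}{2n_\ell}$ at $k_\ell \approx n_\ell/2$) with a $-\Var(k_\ell)/(8(n_\ell/2)^2) = -\tfrac{1}{8n_\ell}$ Jensen correction from $\E{\sqrt{k_\ell}}$.
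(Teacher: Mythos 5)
Your proposal is correct and follows essentially the same route as the paper: reduce moments of the length to moments of $\norm{J_xu}$ via the unit-speed parameterization and a power-mean/Jensen step (Lemma~\ref{L:len-moments}), factor $\norm{J_xu}$ into a product of independent per-layer scaled chi factors with binomial degrees of freedom (Proposition~\ref{L:chi-squared}), and then compute the resulting moments, with your Stirling-plus-Jensen derivation of the $-\tfrac{5}{8n_\ell}$ correction matching the paper's $\E{X^{1/2}}\approx 1-\tfrac18\Var[X]$ computation via the law of total variance. The only substantive divergence is in the proof of \eqref{E:high-moments-1}, where the paper sidesteps the heavy left tail of $\log\norm{J_{\ell,x}e_1}$ that you propose to handle by truncation: it instead uses $x\le e^{x-1}$ to reduce to the moment generating function of a sum of independent \emph{sub-exponential} variables $\tfrac{2}{n_\ell}\chi^2_{{\bf n}_\ell}-1$, with the hypothesis $m<c_1\min_\ell n_\ell$ entering exactly where you predict.
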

Theorem \ref{T:1d} is proved in \S \ref{S:1d-pf}. Several comments are in order. First, we have assumed for simplicity that $M$ has unit length. For curves of arbitrary length, both the equality \eqref{E:mean-1d} and the bounds \eqref{E:1d-var} and \eqref{E:high-moments-1} all hold and are derived in the same way provided $\len(\mN(M))$ is replaced by the distortion $\len(\mN(M))/\len(M)$ per unit length. 

Second, in the expression \eqref{E:mean-1d}, note that the exponent tends to $0$ as $n_\ell\gives \infty$ for any fixed $L$. More precisely, when $n_\ell=n$ is large and constant, it scales as $-5L/8n$. This shows that, for fixed $n_0,n_L,n$, the mean $\E{\len(\mN(M))}$ is actually decreasing with $L$. This somewhat surprising phenomenon is borne out in Fig.~\ref{fig:mean-depth} and is a consequence of the fact that wide fully connected ReLU nets are strongly contracting (see e.g.~Thms.~3 and 4 in \cite{giryes2016deep}). 

Third, the pre-factor $(n_L/n_0)^{1/2}$ encodes the fact that, on average over initialization, for any vector of inputs $x\in \R^{n_0}$, we have (see e.g.~Cor.~1 in \cite{hanin2018start})
\begin{equation}\label{E:input-distortion}
\E{\norm{\mN(x)}^2}\approx \frac{n_L}{n_0}\norm{x}^2,
\end{equation}
where the approximate equality becomes exact if the bias variance $C_b$ is set to $0$ (see \eqref{E:init}). In other words, at the start of training, the network $\mN$ rescales inputs by an average factor $(n_{L}/n_0)^{1/2}$. This overall scaling factor also dilates $\len(\mN(M))$ relative to $\len(M)$.

Fourth, we stated Theorem \ref{T:1d} for Gaussian weights and biases (see \eqref{E:init}), but the result holds for any weight distribution that is symmetric around $0$, has variance $2/\text{fan-in},$ and has finite higher moments. The only difference is that the constant $5/8$ may need to be slightly enlarged (e.g.~around \eqref{E:wide}). 

Finally, all the estimates \eqref{E:mean-1d}, \eqref{E:1d-var}, and \eqref{E:high-moments-1} are consistent with the statement that, up to the scaling factor $(n_L/n_0)^{1/2}$, the random variable $\len(\mN(M))$ is bounded above by a log-normal distribution:
\[
\len(\mN(M))\leq \lr{\frac{n_L}{n_0}}^{1/2}\exp\left[\frac{1}{2}G\lr{\frac{\beta}{2},\beta}\right],
\]
where $\beta = c\sum_{\ell=1}^{L}n_\ell^{-1}$ and $c$ is a fixed constant. 

\subsection{Higher-dimensional manifolds}
\label{S:higherd-results-formal}
The results in the previous section generalize to the case when $M$ has higher dimension. To consider this case, fix a positive integer $d\leq \min\set{n_0,\ldots, n_L}$ and a smooth $d-$dimensional manifold $M\subseteq \R^{n_0}$ of unit volume $\vol_d(M)=1$. Note that if $d> \min\set{n_0,\ldots, n_L}$, then  $\mN(M)$ is at most $(d-1)$-dimensional and its $d$-dimensional volume vanishes. Its image $\mN(M)\subseteq \R^{n_L}$ is a piecewise smooth manifold of dimension at most $d$. The following result, proved in \S \ref{S:higher-d-pf}, gives an upper bound on the typical value of the $d-$dimensional volume of $\mN(M).$ 

\begin{theorem}\label{T:high-d}
Let $\mN$ be a fully connected ReLU network with input dimension $n_0,$ output dimension $n_L$, hidden layer widths $n_1,\ldots, n_{L-1}$ and independent centered Gaussian weights/biases with the variance of the weights given by $2/\text{fan-in}$ (see \eqref{E:init}). Let $M$ be a $d$-dimensional smooth submanifold of $\R^{n_0}$ with unit volume and $d\leq \min\set{n_0,\ldots, n_L}$. Then, both the squared mean and the variance of the $d-$dimensional volume $\vol_d(\mN(M))$ of $\mN(M)$ is bounded above by 
\begin{align}
\label{E:high-d}&\lr{\frac{n_L}{n_0}}^{d}\exp\left[-\binom{d}{2}\sum_{\ell=1}^Ln_\ell^{-1}\right]
\end{align}
\end{theorem}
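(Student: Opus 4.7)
The plan is to bound $\E{\vol_d(\mN(M))^2}$, since both the squared mean (by Jensen's inequality) and the variance (using $\Var(X) \leq \E{X^2}$) are dominated by this single second moment. I would express $\vol_d(\mN(M))$ via the area formula as $\int_M J_d(x)\,d\vol_M(x)$, where $J_d(x) = \sqrt{\det(U_x^T J_x^T J_x U_x)}$, $J_x$ is the input-output Jacobian of $\mN$ at $x$, and $U_x$ is any orthonormal frame spanning $T_xM$. Applying the Cauchy--Schwarz inequality together with the hypothesis $\vol_d(M)=1$ yields
\[
\vol_d(\mN(M))^2 \,\leq\, \int_M J_d(x)^2\,d\vol_M(x),
\]
and taking expectations reduces the problem to a pointwise upper bound on $\E{\det(U_x^T J_x^T J_x U_x)}$ at each fixed $x \in M$.

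The next step is to factor this pointwise expectation over layers. Writing $J_x = J_{L,x}\cdots J_{1,x}$, I would peel off one layer at a time: conditioning on $J_{1,x},\ldots,J_{\ell-1,x}$, let $A_{\ell-1} = J_{\ell-1,x}\cdots J_{1,x}$ and take a QR decomposition $A_{\ell-1}U_x = QR$. Then $\det(U_x^T J_x^T J_x U_x)$ splits multiplicatively as $\det(R^T R)\cdot \det(Q^T J_{\ell,x}^T J_{\ell,x} Q)$, where $R^T R = U_x^T A_{\ell-1}^T A_{\ell-1} U_x$ encodes the first $\ell-1$ layers and $Q$ is orthonormal. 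The Gaussian rotational invariance of $J_{\ell,x} = D_\ell W_\ell$ (coming from the i.i.d.~He normal weights, combined with the activation mask $D_\ell$ which acts on the output side and hence commutes with input rotations) makes $\E{\det(Q^T J_{\ell,x}^T J_{\ell,x} Q)}$ independent of the choice of orthonormal $Q$. Iterating --- which is the higher-dimensional analogue of Lemmas \ref{L:indep} and \ref{L:chi-squared} --- gives
\[
\E{\det(U_x^T J_x^T J_x U_x)} \;=\; \prod_{\ell=1}^{L}\E{\det(V^T J_{\ell,x}^T J_{\ell,x} V)},
\]
where $V$ may be taken to be any fixed orthonormal $n_{\ell-1}\times d$ frame.

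The heart of the proof is the single-layer computation. Applying the Cauchy--Binet formula to $\det((D_\ell W_\ell V)^T D_\ell W_\ell V)$, one writes the determinant as a sum of squares of $d\times d$ minors indexed by size-$d$ subsets of the active ReLU neurons. Each such minor is a $d\times d$ Gaussian matrix with i.i.d.~$G(0,2/n_{\ell-1})$ entries (by rotational invariance of the Gaussian rows of $W_\ell$ against the orthonormal $V$), and the classical identity $\E{\det(Z)^2}=d!$ for standard Gaussian $Z$ combined with the binomial falling-factorial moment $\E{K(K-1)\cdots(K-d+1)} = n_\ell(n_\ell-1)\cdots(n_\ell-d+1)/2^d$ for $K\sim\mathrm{Binom}(n_\ell,1/2)$ yields the exact per-layer value
\[
\E{\det(V^T J_{\ell,x}^T J_{\ell,x} V)} \;=\; \frac{n_\ell^d}{n_{\ell-1}^d}\prod_{j=0}^{d-1}\lr{1-\frac{j}{n_\ell}}.
\]
The ratios $n_\ell^d/n_{\ell-1}^d$ telescope across $\ell=1,\ldots,L$ to $(n_L/n_0)^d$, and the elementary bound $1-x\leq e^{-x}$ applied to each remaining factor produces the exponent $-\sum_{\ell=1}^{L}\sum_{j=0}^{d-1} j/n_\ell = -\binom{d}{2}\sum_{\ell=1}^{L} n_\ell^{-1}$, which is exactly the bound in \eqref{E:high-d}.

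The main obstacle is the single-layer determinant calculation: organizing the Cauchy--Binet expansion against the random activation mask $D_\ell$ cleanly enough that the sum over $|K|$-subsets collapses into the falling-factorial moment. A secondary, more bookkeeping, difficulty is justifying the layer-wise factorization rigorously --- one must verify that the QR-based reduction truly decouples layer $\ell$ from earlier layers and that the rotational invariance of He normal initialization survives composition with $D_\ell$ at each stage, so that iteration produces the clean product above rather than a cross-layer entanglement.
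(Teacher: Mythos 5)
Your proposal follows the paper's proof essentially step for step: reduce both the squared mean and the variance to the second moment, apply the area formula and Cauchy--Schwarz to localize to the expected Gram determinant of $J_x$ restricted to $T_xM$, factor this expectation over layers using the rotational invariance of the Gaussian weights against an orthonormal frame (the paper's Lemma~\ref{L:indep-gen}), compute the per-layer expectation exactly, telescope the $(n_\ell/n_{\ell-1})^d$ ratios, and finish with $1-x\leq e^{-x}$. The only substantive difference is in how the single-layer quantity is evaluated: you use Cauchy--Binet together with $\E{\det(Z)^2}=d!$ and the binomial falling-factorial moment, whereas the paper writes the same determinant as $\norm{J_xu_1\wedge\cdots\wedge J_xu_d}^2$ and decomposes it into a product of scaled chi-squared variables with decreasing degrees of freedom (Proposition~\ref{L:chi-squared-general}); both routes yield the identical per-layer factor $\prod_{j=0}^{d-1}\lr{1-j/n_\ell}$, and the residual subtlety you flag (decoupling the activation masks from the weights) is exactly the point the paper handles by citing the mask-resampling result from prior work.
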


\section{Proof Sketch}\label{S:1d-pf-outline}
The purpose of this section is to explain the main steps to obtaining the mean and variance estimates \eqref{E:mean-1d} and \eqref{E:1d-var} from Theorem \ref{T:1d}. In several places, we will gloss over some mathematical subtleties that we deal with in the detailed proof of Theorem \ref{T:1d} given in Appendix \ref{S:1d-pf}. We will also content ourselves here with proving a slightly weaker estimate than \eqref{E:mean-1d} and show instead simply that 
\begin{equation}\label{E:mean-ub}
    \E{\len(\mN(M))}\leq \lr{\frac{n_L}{n_0}}^{1/2}\qquad \text{and}\qquad \Var\left[\len(\mN(M))\right]\leq \frac{n_L}{n_0}.
\end{equation}
We refer the reader to Appendix \ref{S:1d-pf} for a derivation of the more refined result stated in Theorem \ref{T:1d}. Since we have $\E{X}^2\leq \E{X^2}$ and $\Var[X]\leq \E{X^2}$, both estimates in \eqref{E:mean-ub} follow from
\begin{equation}\label{E:outline-goal}
    \E{\len(\mN(M))^2}\leq \frac{n_L}{n_0}.
\end{equation}

To obtain this bound, we begin by relating moments of $\len(\mN(M))$ to those of the input-output Jacobian of $\mN$ at a single point for which we need some notation. Namely, let us choose a unit speed parameterization of $M$; that is, fix a smooth mapping
\[
\gamma:\R\gives \R^{n_0},\qquad \gamma(t) = \lr{\gamma_1(t),\ldots, \gamma_{n_0}(t)}
\]
for which $M$ is the image under $\gamma$ of the interval $[0,1]$. That $\gamma$ has unit speed means that for every $t\in [0,1]$ we have $\norm{\gamma'(t)} =1$, where $\gamma'(t):=\lr{\gamma_1'(t),\ldots, \gamma_{n_0}'(t)}.$ Then, the mapping $\Gamma:=\mN\circ \gamma$ (for $\Gamma:\R\gives \R^{n_L}$) gives a parameterization of the curve $\mN(M)$. Note that this parameterization is not unit speed. Rather the length of $\mN(M)$ is given by 
\begin{equation}\label{E:gen-len-outline}
    \len(\mN(M)) = \int_0^{1} \norm{\Gamma'(t)}dt.
\end{equation}
Intuitively, the integrand $\norm{\Gamma'(t)}dt$ computes, at a given $t$, the length of $\Gamma([t,t+dt])$ as $dt\gives 0.$ The following Lemma uses \eqref{E:gen-len-outline} to bound the moments of $\len(\mN(M))$ in terms of the moments of the norm of the input-output Jacobian $J_x$, defined for any $x\in\R^{n_0}$ by 
\begin{equation}\label{E:Jac-def}
    J_x :=\lr{\frac{\partial \mN_i}{\partial x_j}(x)}_{\substack{1\leq i \leq n_L\\ 1\leq j \leq n_0}},
\end{equation}
where $\mN_i$ is the $i$-th component of the network output. 
\begin{lemma}\label{L:len-moments-outline}
We have
\begin{equation}\label{E:len-moments-bound-outline}
    \E{\len(\mN(M))^2} \leq \int_0^{1}\E{\norm{J_{\gamma(t)} \gamma'(t)}^2} dt.
\end{equation}
\end{lemma}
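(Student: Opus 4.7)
The plan is to combine the chain rule (valid almost everywhere because $\mathcal N$ is piecewise linear), a Cauchy--Schwarz inequality applied to the one-dimensional integral that defines arc length, and Fubini--Tonelli to swap the resulting expectation with the $t$-integral. Concretely, starting from the formula
\[
\len(\mathcal N(M))=\int_0^1 \norm{\Gamma'(t)}\,dt,
\]
I first want to identify $\Gamma'(t)$ pointwise. Since $\mathcal N$ is continuous and piecewise linear, for each fixed realization of the weights there are only finitely many $t\in[0,1]$ at which $\gamma(t)$ lies on the boundary between two activation regions, and away from these exceptional times the chain rule gives $\Gamma'(t)=J_{\gamma(t)}\gamma'(t)$. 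The exceptional set has Lebesgue measure zero, so it does not affect the integral above.

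Next, I would apply Cauchy--Schwarz (equivalently, Jensen's inequality for the convex map $u\mapsto u^2$ with respect to the uniform probability measure $dt$ on $[0,1]$) to the deterministic bound
\[
\left(\int_0^1 \norm{\Gamma'(t)}\,dt\right)^{2}\;\leq\;\int_0^1 1\,dt\;\cdot\;\int_0^1\norm{\Gamma'(t)}^{2}\,dt\;=\;\int_0^1\norm{J_{\gamma(t)}\gamma'(t)}^{2}\,dt,
\]
which holds pointwise for every realization of the weights (using $\len(M)=1$). Taking expectations on both sides and invoking Fubini--Tonelli (legitimate because the integrand is non-negative and measurable jointly in $t$ and the Gaussian weights) gives
\[
\E{\len(\mathcal N(M))^{2}}\;\leq\;\E{\int_0^1\norm{J_{\gamma(t)}\gamma'(t)}^{2}\,dt}\;=\;\int_0^1 \E{\norm{J_{\gamma(t)}\gamma'(t)}^{2}}\,dt,
\]
which is exactly the claimed bound.

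The only mildly subtle step is justifying the chain rule along $\gamma$, since the network is not everywhere differentiable. The cleanest way to handle this is to note that $\Gamma=\mathcal N\circ\gamma$ is almost surely a continuous piecewise-$C^1$ curve with finitely many break points (because $\gamma$ is smooth and the activation hyperplanes of $\mathcal N$ form a locally finite arrangement), so the fundamental theorem of calculus applied separately on each smooth piece yields $\len(\mathcal N(M))=\int_0^1 \norm{J_{\gamma(t)}\gamma'(t)}\,dt$ with the integrand defined almost everywhere. Once this is in place, the rest is a one-line application of Cauchy--Schwarz and Fubini, so I do not expect any real obstacle beyond this measure-theoretic bookkeeping.
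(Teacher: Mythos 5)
Your proof is correct and follows essentially the same route as the paper: reduce $\E{\len(\mN(M))^2}$ to $\int_0^1\E{\norm{\Gamma'(t)}^2}dt$ via a convexity inequality plus Tonelli, then identify $\Gamma'(t)=J_{\gamma(t)}\gamma'(t)$ by the chain rule (the paper handles the nondifferentiability via Rademacher's theorem and a probability-one argument at each fixed $t$, which matches your measure-zero bookkeeping). The only cosmetic difference is that you apply Cauchy--Schwarz pointwise before taking expectations, whereas the paper expands the square into a double integral and applies $ab\leq\tfrac{1}{2}(a^2+b^2)$ under the expectation; these are the same estimate.
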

\begin{proof}[Sketch of Proof]
Taking powers in \eqref{E:gen-len-outline} and interchanging the expectation and integrals, we obtain
\begin{equation}\label{E:gen-len-moments-outline}
    \E{\len(\mN(M))^2} = \int_0^{1}\int_0^{1} \E{\norm{\Gamma'(t_1)}\norm{\Gamma'(t_2)}} dt_1dt_2.
\end{equation}
Applying the inequality $ab\leq \frac{1}{2}(a^2+b^2)$, for $a,b\in \R$, to the integrand in \eqref{E:gen-len-moments-outline}, we conclude 
\begin{equation}\label{E:len-moments-bound-2-outline}
    \E{\len(\mN(M))^2} \leq \int_0^{1}\E{\norm{\Gamma'(t)}^2} dt.
\end{equation}
The chain rule yields $\Gamma'(t) = J_{\gamma(t)}\gamma'(t)$.
Substituting this into \eqref{E:len-moments-bound-2-outline} completes the proof. 
\end{proof}
Lemma \ref{L:len-moments-outline}, while elementary, appears to be new, allowing us to bound global quantities such as length in terms of local quantities such as the moments of $\norm{J_xu}$. In particular, having obtained the expression \eqref{E:len-moments-bound-outline}, we have reduced bounding the second moment of $\len(\mN(M))$ to bounding the second moment of the random variable $\norm{J_{x}u}$ for a fixed $x\in \R^{n_0}$ and unit vector $u\in \R^{n_0}$. This is a simple matter since in our setting the distribution of weights and biases is Gaussian and hence, in distribution, $\norm{J_x u}$ is equal to a product of independent random variables:

\begin{proposition}\label{L:chi-squared-outline}
For any $x\in \R^{n_0}$ and any unit vector $u\in \R^{n_0}$, the random variable $\norm{J_{x}u}^2$ is equal in distribution to a product of independent scaled chi-squared random variables
\[
\norm{J_{x}u}^2\stackrel{d}{=} \frac{n_L}{n_0}\lr{ \prod_{\ell=1}^{L-1} \frac{2}{n_\ell} \chi_{{\bf n_\ell}}^2}\cdot \frac{1}{n_L}\chi_{n_L}^2,
\]
where the number of degrees of freedom in the $\ell$th term of the product (for $\ell=1,\ldots, L-1$) is given by an independent binomial ${\bf n_\ell} \stackrel{d}{=} \mathrm{Bin}\lr{n_\ell, 1/2}$ with $n_\ell$ trials and success probability $1/2$. The number of degrees of freedom in the final term is deterministic and given by $n_L$.
\end{proposition}

This Proposition has been derived a number of times in the literature (see Theorem 3 in \cite{hanin2018neural} and Fact 7.2 in \cite{allen2019convergence}). We provide an alternative proof based on Proposition 2 in \cite{hanin2019products} in the Supplementary Material. Note that the distribution of $\norm{J_xu}$ is the same at every $x$ and $u$. Thus, fixing $x\in \R^{n_0}$ and a unit vector $u\in \R^{n_0}$, we find
\[
\E{\len(\mN(M))^2} \leq \E{\norm{J_xu}^2}.
\]
To prove \eqref{E:outline-goal}, we note that $\E{\chi_{\bf n}^2} =n/2$ and apply Lemma \ref{L:chi-squared-outline} to find, as desired,
\begin{align*}
    \E{\norm{J_{x}u}^2} &= \frac{n_L}{n_0} \lr{\prod_{\ell=1}^{L-1} \E{\frac{2}{n_\ell}\chi_{{\bf n}_\ell}^2}}\E{n_L^{-1}\chi_{n_L}^2}  =\frac{n_L}{n_0} \prod_{\ell=1}^{L-1}\left\{\frac{2}{n_\ell} \E{{\bf n}_\ell}\right\} =\frac{n_L}{n_0}.
\end{align*} 
\hfill $\square$

\section{Conclusion}
\label{S:limitations}
We show that deep ReLU networks with appropriate initialization do not appreciably distort lengths and volumes, contrary to prior assumptions of exponential growth. Specifically, we provide an exact expression for the mean length distortion, which is bounded above by 1 and decreases slightly with increasing depth. We also prove that higher moments of the length distortion admit well-controlled upper bounds, and generalize this to  distortion of higher dimensional volumes. We show empirically that our theoretical results closely match observations, unlike previous loose lower bounds.

There remain several promising directions for future work. First, we have proved statements for networks at initialization, and our results do not necessarily hold after training; analyzing such behavior formally would require consideration of the loss function, optimizer, and data distribution. Second, our results, as stated, do not apply to convolutional, residual, or recurrent networks. We envision generalizations for networks with skip connections being straightforward, while formulations for convolutional and recurrent networks will likely be more complicated. In Appendix \ref{S:res-conv}, we provide preliminary results suggesting that expected length distortion decreases modestly with depth in both convolutional networks and those with skip connections. Third, we believe that various other measures of complexity for neural networks, such as curvature, likely demonstrate similar behavior to length and volume distortion in average-case scenarios with appropriate initialization. While somewhat parallel results for counting linear regions were shown in \cite{hanin2019complexity,hanin2019deep}, there remains much more to be understood for other notions of complexity. Finally, we look forward to work that explicitly ties properties such as low length distortion to improved generalization, as well as new learning algorithms that leverage this line of theory in order to better control inductive biases to fit real-world tasks.

\bibliography{main}
\bibliographystyle{plain}

\appendix

\section{Experiments for Convolutional and Residual Networks}
\label{S:res-conv}

We here provide preliminary experimental results for the mean length distortion in networks with convolutional layers and skip connections.

In Figure \ref{fig:other-architectures}(a), we show results for networks with sequential convolutional layers (without pooling layers), initialized as before with weights i.i.d.~normal with variance $2/\text{fan-in}$, and a final fully connected layer. We use input dimension $n_0=16\times 16 \times 3=768$ and output dimension 5. Our results indicate that the mean length distortion is, as expected, approximately equal to $\sqrt{n_L/n_0}\approx 0.08$, and that it decays slightly with depth.

In Figure \ref{fig:other-architectures}(b), we consider residual networks. Here, the overall network $\mathcal{N}$ is defined in terms of residual modules $\mathcal{N}_1,\mathcal{N}_2,\ldots,\mathcal{N}_L$ and scales $\eta_1,\eta_2,\ldots,\eta_L$ according to:
\begin{align*}
\mathcal{N}(x) &= x + \eta_1\mathcal{N}_1(x) + \eta_2\mathcal{N}_2(x + \eta_1\mathcal{N}_1(x)) +\cdots\\
&\; + \eta_L\mathcal{N}_L\left(x + \eta_1\mathcal{N}_1(x) +\eta_2\mathcal{N}_2(x + \eta_1\mathcal{N}_1(x)) + \cdots)\right).
\end{align*}
We set all residual modules $\mathcal{N}_\ell$ to be two-layer, fully connected ReLU networks. In keeping with \cite{hanin2018start}, we initialize all weights i.i.d.~normal with variance $2/\text{fan-in}$, while setting $\eta_1=\eta_2=\cdots=\eta_L=1/L$. Our results suggest that mean length distortion again decays modestly overall, with a somewhat sharper decrease for small depths.

\begin{figure}[ht]
    \centering
    \includegraphics[width=\textwidth]{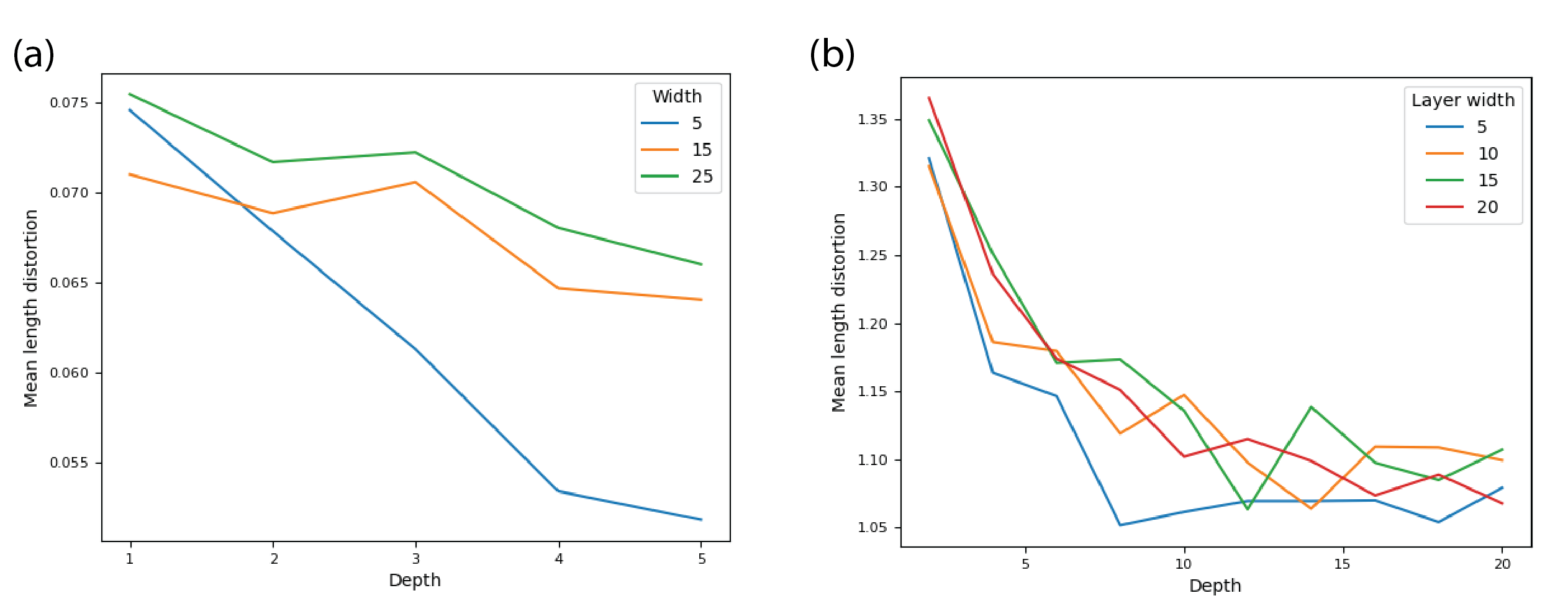}
    \caption{Mean length distortion as a function of depth, for networks with convolutional layers and skip connections, initialized using He normal initialization \cite{he2015delving}. (a) shows results for networks having convolutional layers, where the input dimension $n_0$ equals $16 \times 16 \times 3 = 768$ and output dimension $n_L$ equals $5$. Here width corresponds to the number of $3\times 3$ kernels in each layer. As expected, the mean length distortion is $\sqrt{n_L/n_0}\approx 0.08$ and decays modestly with depth. (b) shows results for networks with skip connections occurring between even layers (i.e.~each residual block is a fully-connected ReLU network of depth $2$), again taking a line segment of unit length as the input curve. Here, depth corresponds to the total number of layers in the network, so that a depth-20 network includes $10$ residual blocks. Both plots (a) and (b) show the empirical mean over $100$ initializations of the weights and biases.
    }
    \label{fig:other-architectures}
\end{figure}

\section{Experimental Details}
\label{S:exp-details}
For all experiments, weights were initialized from i.i.d.~normal distributions with variance $2/\text{fan-in}$ and bias variance $0.1$. We run several experiments that involve computing the length distortion of a given line segment in $\mathbb{R}^{n_0}$. We remark on how this was done, for which we rely on the notion of linear regions and bent hyperplanes associated with ReLU networks, explored in \cite{hanin2019deep, telgarsky2015representation}. Specifically, ReLU networks partition input space into a collection of convex polytopes, which we call linear regions, as (generically) distinct linear functions are defined on each region corresponding to a different subset of the hidden neurons being activated (where a neuron is activated if its pre-activation is nonnegative). The boundaries of these linear regions are given by sets of points for which a particular neuron has preactivation equal to $0$.

Say we are given a line segment $\ell$ with endpoints $\mathbf{p}_1, \ \mathbf{p}_2 \in \mathbb{R}^{n_0}$ parametrized by unit speed on $[0,1]$, given by the set $\{\mathbf{p}_1 + t(\mathbf{p}_2 - \mathbf{p}_1): t \in [0,1]\}$,
for which we aim to compute the length distortion. Let $\mathbf{w}^\ell = \mathbf{p}_2 - \mathbf{p}_1$.

We first approximate the intersections of the line segment with linear region boundaries using a binary search subroutine. Specifically, initialize the set $\mathcal S = [0, 0.5, 1]$, which will contain the parameter values for these approximations. For each iteration of the subroutine, we run the following procedure: for any three consecutive points $t_1, t_2, t_3 \in \mathcal S$, let $\mathbf{x}_1 = \mathbf{p}_1 + t_1(\mathbf{p}_2 - \mathbf{p}_1), \ \mathbf{x}_2 = \mathbf{p}_1 + t_2(\mathbf{p}_2 - \mathbf{p}_1), \ \mathbf{x}_3 = \mathbf{p}_1 + t_3(\mathbf{p}_2 - \mathbf{p}_1)$, and consider the values $\frac{||\mathcal N(\mathbf{x}_2) - \mathcal N(\mathbf{x}_1)||}{||\mathbf{x}_2 - \mathbf{x}_1||}$ and $\frac{||\mathcal N(\mathbf{x}_3) - \mathcal N(\mathbf{x}_2)||}{||\mathbf{x}_3 - \mathbf{x}_2||}$. These values are equal if the three points are in the same linear region, in which case we eliminate $t_1$ and $t_3$ from $\mathcal S$. If not equal, then there exists a linear region boundary in the segment from $\textbf{x}_1$ to $\textbf{x}_3$; we add the points $\frac{t_1 + t_2}{2}$ and $\frac{t_2 + t_3}{2}$ to $\mathcal S$. We iterate this procedure (15 times in our implementation); for the final iteration, we ensure that only the last point associated with a given linear region is in $\mathcal S$.

Now take the set $\mathcal S = [t_1, t_2, \dots, t_n]$ returned by the binary search procedure; we proceed to compute the parameter values denoting the exact intersections of the line segment with region boundaries, which we shall store in $\mathcal S^*$. For consecutive points $t_i, \ t_{i+1} \in \mathcal S, \ i = 1, \dots, n-1$, determine the set of activated neurons for both points at $\mathbf x_i = \mathbf{p}_1 + t_i(\mathbf{p}_2 - \mathbf{p}_1), \ \mathbf x_{i+1} = \mathbf{p}_1 + t_{i+1}(\mathbf{p}_2 - \mathbf{p}_1)$, and find the neuron at which these sets differ; we solve a linear equation to determine the value $t^* \in [0,1]$ for which this neuron is $0$, and replace $t_i$ with the exact value $t_i^*$ in $\mathcal S^*$. Finally, we append $0$ and $1$ to the respective ends of $\mathcal S^* = [t^*_0 = 0, t^*_1, \dots, t^*_n = 1]$.

Computing the length distortion is reduced to summing the lengths of the output segments corresponding to consecutive pairs $t_i, \ t_{i+1} \in \mathcal S^*$.  Namely, for each such pair, the network is given by a single linear function on the segment of inputs between $\mathbf x_i = \mathbf{p}_1 + t_i(\mathbf{p}_2 - \mathbf{p}_1)$ and $\mathbf x_{i+1} = \mathbf{p}_1 + t_{i+1}(\mathbf{p}_2 - \mathbf{p}_1)$. We calculate the weight matrix $\mathbf{W}_i$ of this linear function -- the length of the corresponding output segment is the product of $\mathbf{W}_i$ with the vector $\mathbf{w}^\ell$. The total length distortion is then given by the sum
\begin{align*}
    \sum_{i=0}^{n-1} ||\mathbf{W}_i\mathbf{w}^\ell|| (t^*_{i+1} - t^*_i)
\end{align*}

\section{Proof of Theorem \ref{T:1d}}\label{S:1d-pf}
Our first step in proving Theorem \ref{T:1d} is to obtain bounds on the moments of $\len(\mN(M))$ in terms of the input-output Jacobian of $\mN$ at a single point, which we recall was defined in \eqref{E:Jac-def}. To accomplish them, we recall the notation from \S \ref{S:1d-pf-outline}. Namely, fix a smooth unit speed parameterization of $M =\gamma([0,1])$ with
\[
\gamma:\R\gives \R^{n_0},\qquad \gamma(t) = \lr{\gamma_1(t),\ldots, \gamma_{n_0}(t)}.
\]
The mapping 
\[
\Gamma:=\mN\circ \gamma,\qquad \Gamma:\R\gives \R^{n_L}
\]
gives a parameterization of the curve $\mN(M)$, and we have
\begin{equation}\label{E:gen-len}
    \len(\mN(M)) = \int_0^{1} \norm{\Gamma'(t)}dt.
\end{equation}
Let us note an important but ultimately benign technicality. The Jacobian $J_x$ of the map $x\mapsto \mN(x)$ is not defined at every $x$ (namely those $x$ where some neuron turns from on to off). Thus, \emph{a priori}, $\Gamma'(t)$ exists only at those $t$ for which $J_{\gamma(t)}$ exists. However, the formula \eqref{E:gen-len} is still valid. Indeed, for any setting of weights and biases of $\mN$ the map $\Gamma$ is Lipschitz. Thus, by Rademacher's theorem, $\Gamma'(t)$ exists for almost every $t\in [0,1]$ and the length of the curve is given by integrating the norm of this almost-everywhere defined derivative. The following simple Lemma is a generalization of Lemma \ref{L:len-moments-outline} and allows us to bound all moments of $\len(\mN(M))$ in terms of the moments of the norm of the Jacobian vector product $\norm{J_x u}$.
\begin{lemma}\label{L:len-moments}
For any integer $m\geq 1$, we have
\begin{equation}\label{E:len-moments-bound}
    \E{\len(\mN(M))^m} \leq \int_0^{1}\E{\norm{J_{\gamma(t)} \gamma'(t)}^m} dt.
\end{equation}
\end{lemma}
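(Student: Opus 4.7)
The plan is to repeat the scheme of Lemma \ref{L:len-moments-outline}, replacing the $m=2$ inequality $ab\leq (a^2+b^2)/2$ with Jensen's inequality applied to the convex function $x\mapsto x^m$. Starting from the unit-speed parameterization $\gamma:[0,1]\gives \R^{n_0}$ of $M$ and $\Gamma:=\mN\circ \gamma$, the length formula \eqref{E:gen-len} gives
\[
\len(\mN(M)) = \int_0^{1}\norm{\Gamma'(t)}\,dt,
\]
which is valid almost everywhere because $\Gamma$ is Lipschitz (composition of the piecewise linear, Lipschitz network with smooth $\gamma$) and hence, by Rademacher's theorem, differentiable at a.e.~$t\in [0,1]$.

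First I would apply Jensen's inequality to $x\mapsto x^m$ against the uniform probability measure $dt$ on $[0,1]$, obtaining pointwise in the randomness
\[
\lr{\int_0^1 \norm{\Gamma'(t)}\,dt}^m \leq \int_0^1 \norm{\Gamma'(t)}^m\,dt.
\]
Next I would take expectations of both sides and swap the expectation with the outer integral by Tonelli's theorem, which applies since the integrand is nonnegative. This yields
\[
\E{\len(\mN(M))^m} \leq \int_0^1 \E{\norm{\Gamma'(t)}^m}\,dt.
\]
Finally, the chain rule gives $\Gamma'(t)=J_{\gamma(t)}\gamma'(t)$ wherever $J_{\gamma(t)}$ is defined, and substituting this identity into the previous display produces the claimed bound \eqref{E:len-moments-bound}.

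The only point requiring care is the a.e.~existence of $J_{\gamma(t)}$: for any fixed realization of the weights and biases, $\mN$ is piecewise linear with finitely many linear regions, so the set of $t\in [0,1]$ at which $\gamma(t)$ lies on a region boundary is generically finite and in any case of Lebesgue measure zero. Combined with the Lipschitz property of $\Gamma$, this is enough to justify the identifications above in the a.e.~sense needed to carry out the integrations. I do not anticipate any substantive obstacle beyond cleanly stating this measure-zero caveat; the Jensen-plus-Tonelli-plus-chain-rule sequence is otherwise entirely routine and mirrors the $m=2$ argument already sketched.
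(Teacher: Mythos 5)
Your proof is correct and follows essentially the same route as the paper: the paper replaces your single application of Jensen's inequality with Tonelli followed by the power-mean inequality $\prod_i a_i \leq \frac{1}{m}\sum_i a_i^m$ on the $m$-fold integral, but these are the same elementary convexity estimate, and the Tonelli-plus-chain-rule steps coincide. The only substantive difference is in the differentiability caveat: rather than arguing that for a fixed weight configuration the set of bad $t$ has measure zero (which can fail for degenerate configurations where the curve lies inside a region boundary), the paper fixes $t$ and uses the continuous density of the biases to conclude that $J_{\gamma(t)}$ exists with probability $1$ over the weights, which is the cleaner way to justify $\Gamma'(t)=J_{\gamma(t)}\gamma'(t)$ almost everywhere.
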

\begin{proof}
Taking powers in \eqref{E:gen-len} and using Tonelli's theorem to interchange the expectation and integrals, we obtain
\begin{equation}\label{E:gen-len-moments}
    \E{\len(\mN(M))^m} = \int_0^{1}\cdots \int_0^{1} \E{\prod_{j=1}^m \norm{\Gamma'(t_j)}} dt_1\cdots dt_m.
\end{equation}
To proceed, let us recall a special case of the power mean inequality which says that for any $a_i\geq 0$ we have
\[
\prod_{i=1}^m a_i \leq \frac{1}{m} \sum_{i=1}^m a_i^m
\]
Applying this to the integrand in \eqref{E:gen-len-moments}, we conclude 
\begin{equation}\label{E:len-moments-bound-2}
    \E{\len(\mN(M))^m} \leq \int_0^{1}\E{\norm{\Gamma'(t)}^m} dt.
\end{equation}
Next, fix $t\in [0,1]$. For any neuron $z$ in $\mathcal N$, denote by $x\mapsto z(x)$ its pre-activation. Note that $\mathbb P(z(\gamma(t))=0)=0$ since our bias variance $C_b$ is set to some fixed positive constant and hence the bias of each neuron has a continuous density. Therefore, with probability $1$ over the randomness in the weights and biases of $\mN$, there exists a neighborhood $\mathcal U$ of $\gamma(t)$ on which $z(x)$ has constant sign for all $x\in \mathcal U$. The Jacobian $J_{\gamma(t)}$ is therefore well-defined and the chain rule yields 
\begin{equation}\label{E:Gamma-prime}
\Gamma'(t) = J_{\gamma(t)}\gamma'(t).
\end{equation}
Substituting this into \eqref{E:len-moments-bound-2} completes the proof. 
\end{proof}
Having obtained the expression \eqref{E:len-moments-bound}, we have reduced bounding the moments of $\len(\mN(M))$ to bounding the moments of the random variable $\norm{J_{x}u}$ for a fixed $x\in \R^{n_0}$ and unit vector $u\in \R^{n_0}$. Prior work (e.g.~Thm. 1 in \cite{hanin2019products}) shows that these moments satisfy
\[\E{\norm{J_x u}^{2m}} = \lr{\frac{n_L}{n_0}}^{m} \exp\lr{5\binom{m}{2}\sum_{\ell=1}^{L-1}n_\ell^{-1} + O\lr{\sum_{\ell=1}^{L-1}n_\ell^{-2}}},
\]
provided
\begin{equation}\label{E:wide}
\binom{m}{2}<\min_{\ell=1,\ldots, L-1}n_\ell.
\end{equation}
Substituting these moment estimates in \eqref{E:len-moments-bound} completes the derivation of \eqref{E:high-moments-1}. However, the results in \cite{hanin2019products} are subtle because they apply to any distribution of weights and biases. They also give the slightly sub-optimal restriction \eqref{E:wide} that $m^2$ must be smaller than a constant times the minimum of the $n_\ell$'s. In the special case where the distribution of weights and biases is Gaussian, we can do slightly better by computing the moments of $\norm{J_x u}$ more directly (note that in the statement of Theorem \ref{T:1d}, we required only that $m$ is smaller than a constant times the minimum of the $n_\ell$'s). We will need this anyway to derive the slightly more refined estimates in \eqref{E:mean-1d} and \eqref{E:1d-var}. Let us therefore check that, in distribution, $\norm{J_x u}$ is equal to a product of independent random variables:

\begin{proposition}\label{L:chi-squared}
For any $x\in \R^{n_0}$ and any unit vector $u\in \R^{n_0}$, the random variable $\norm{J_{x}u}^2$ is equal in distribution to a product of independent scaled chi-squared random variables
\[
\norm{J_{x}u}^2\stackrel{d}{=} \frac{n_L}{n_0}\lr{ \prod_{\ell=1}^{L-1} \frac{2}{n_\ell} \chi_{{\bf n_\ell}}^2}\cdot \frac{1}{n_L}\chi_{n_L}^2,
\]
where the number of degrees of freedom in the $\ell$th term of the product (for $\ell=1,\ldots, L-1$) is given by an independent binomial
\[
{\bf n_\ell} \stackrel{d}{=} \mathrm{Bin}\lr{n_\ell, 1/2}
\]
with $n_\ell$ trials and success probability $1/2$. The number of degrees of freedom in the final term is deterministic and given by $n_L$.
\end{proposition}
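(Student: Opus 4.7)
The plan is to prove Proposition \ref{L:chi-squared} by layer-by-layer induction, using two ingredients: the weight matrices $W^{(\ell)}$ are independent across layers (so each layer contributes an independent factor in the product), and a single distributional identity for bivariate Gaussians that neutralizes the correlation between the pre-activation (which controls whether a neuron is active) and the Jacobian--vector product (which we are trying to analyze).

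Set up the recursion $v^{(0)}=u$ and $v^{(\ell)} = D^{(\ell)}_x W^{(\ell)} v^{(\ell-1)}$ for $\ell=1,\ldots, L-1$, with $v^{(L)} = W^{(L)} v^{(L-1)}$, so that $J_x u = v^{(L)}$. Conditioning on everything up through layer $\ell-1$ fixes both $v^{(\ell-1)}$ and the layer $\ell-1$ post-activation $a^{(\ell-1)}$. Since the rows of $W^{(\ell)}$ are iid Gaussian, the coordinates $v^{(\ell)}_j = \mathbf{1}[Y_j\geq 0]\,X_j$ are iid in $j$, where $X_j=(W^{(\ell)})_j\cdot v^{(\ell-1)}$ and $Y_j = (W^{(\ell)})_j\cdot a^{(\ell-1)} + b^{(\ell)}_j$ form a centered bivariate Gaussian pair (possibly correlated, via the inner product $\inprod{v^{(\ell-1)}}{a^{(\ell-1)}}$).

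The crux is the following identity: for any centered bivariate Gaussian $(X,Y)$ with $\Pr(Y=0)=0$,
\[
\mathbf{1}[Y\geq 0]\,X^2 \;\stackrel{d}{=}\; B\,\tilde X^2,
\]
where $B\sim \mathrm{Bernoulli}(1/2)$ is independent of $\tilde X\stackrel{d}{=}X$. To prove it, use $(X,Y)\stackrel{d}{=}(-X,-Y)$ by centering, so $\E{e^{tX^2}\mathbf{1}[Y\geq 0]} = \E{e^{tX^2}\mathbf{1}[Y\leq 0]} = \tfrac12\,\E{e^{tX^2}}$; hence the MGF of $\mathbf{1}[Y\geq 0]X^2$ equals $\tfrac12 + \tfrac12\,\E{e^{tX^2}}$, matching that of $B\tilde X^2$. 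Applied to each $j$, this yields iid $(v^{(\ell)}_j)^2 \stackrel{d}{=} B_j\tilde X_j^2$ with $\tilde X_j\sim G(0,(2/n_{\ell-1})\norm{v^{(\ell-1)}}^2)$, and summing over $j$ gives
\[
\norm{v^{(\ell)}}^2 \;\stackrel{d}{=}\; \tfrac{2}{n_{\ell-1}}\norm{v^{(\ell-1)}}^2\,\chi^2_{\mathbf{n}_\ell},\qquad \mathbf{n}_\ell:=\sum_j B_j\sim\mathrm{Bin}(n_\ell,1/2),
\]
with the chi-squared freshly drawn from the randomness of $W^{(\ell)}$.

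Iterating for $\ell=1,\ldots,L-1$ and finishing with the linear output layer --- where $\norm{W^{(L)} v^{(L-1)}}^2\mid v^{(L-1)} \stackrel{d}{=} (2/n_{L-1})\norm{v^{(L-1)}}^2\,\chi^2_{n_L}$ directly, with no sign-symmetry needed and hence the deterministic index $n_L$ in the final factor --- produces a product of mutually independent chi-squared factors; independence across layers is immediate from Lemma \ref{L:indep}. A simple algebraic rearrangement of the per-layer constants pulls out the overall $n_L/n_0$ prefactor and the mean-one scalings $(2/n_\ell)\chi^2_{\mathbf{n}_\ell}$ and $(1/n_L)\chi^2_{n_L}$, recovering the stated form. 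The only real obstacle is the distributional identity in the third paragraph: without it one is pushed into messier orthogonal decompositions that explicitly track the component of $v^{(\ell-1)}$ parallel to $a^{(\ell-1)}$, and the MGF observation is exactly what makes those unnecessary.
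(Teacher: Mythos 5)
Your proof is correct, and it takes a genuinely different route from the paper's. The paper outsources the crucial decoupling step --- replacing the activation matrices $D^{(\ell)}$, which are correlated with the weights through the pre-activations, by independently resampled copies with i.i.d.\ $\mathrm{Bernoulli}(1/2)$ diagonals --- to Proposition~2 of \cite{hanin2019products}, and then peels off one layer at a time using rotation invariance of Gaussian matrices (Lemma~\ref{L:indep}). You instead prove the decoupling from scratch: conditioning on the first $\ell-1$ layers makes the pairs $(X_j,Y_j)$ i.i.d.\ centered bivariate Gaussians, and your symmetrization identity $\mathbf 1[Y\ge 0]\,X^2\stackrel{d}{=}B\tilde X^2$ (proved via $(X,Y)\stackrel{d}{=}(-X,-Y)$) shows that the generically nonzero correlation, proportional to $\inprod{v^{(\ell-1)}}{a^{(\ell-1)}}$, between the Jacobian--vector coordinate and the pre-activation is invisible at the level of squared norms. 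This is exactly the elementary mechanism behind the cited result (the residual sign matrix $\eta$ there is the trace of the same symmetry), but your version is self-contained, needs no normalization of $v^{(\ell-1)}$ to a unit vector (so it sidesteps the paper's special-casing of the event $\mathcal D^{(\ell)}W^{(\ell)}u=0$), and it delivers independence of the per-layer factors directly: they are independent because the conditional law of $\norm{v^{(\ell)}}^2/\norm{v^{(\ell-1)}}^2$ given the earlier layers does not depend on the conditioning --- that, rather than Lemma~\ref{L:indep} which you cite at the end, is the right justification, and it is already contained in your computation. One bookkeeping remark that applies equally to your derivation and to the paper's own (compare \eqref{E:chi-gen}--\eqref{E:chi-last}): the recursion produces the scalar $\prod_{\ell=1}^{L}(2/n_{\ell-1})$, which is twice the prefactor displayed in the Proposition, the extra factor of $2$ coming from the ReLU-free output layer whose weights also have variance $2/n_{L-1}$ under \eqref{E:init}; this is an inconsistency internal to the paper's statement rather than a flaw in your argument, so your claim to ``recover the stated form'' should be read with that caveat.
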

\begin{proof}
Consider a ReLU network $\mN$ with input dimension $1$, output dimension $n_L$, and hidden layer widths $n_1,\ldots, n_{L-1}$. Suppose the weight matrices $W^{(\ell)}$ and bias vectors $b^{(\ell)}$ are independent with i.i.d.~Gaussian components:
\[
W_{ij}^{(\ell)}\sim G(0,2/n_{\ell-1}),\qquad b_j^{(\ell)}\sim G(0,C_b),
\]
where $C_b> 0$ is any fixed constant. For a fixed network input $x\in \R^{n_0}$, with probability $1$, the input-output Jacobian $J_x$ is well-defined. Moreover, it can be written as
\[
J_x = W^{(L)}D^{(L-1)}W^{(L-1)}\cdots D^{(1)}W^{(1)},
\]
where $W^{(\ell)}$ is the matrix of weights from layer $\ell-1$ to layer $\ell$ and $D^{(\ell)}$ is an $n_\ell \times n_\ell$ diagonal matrix:
\[
D^{(\ell)}=\mathrm{Diag}\lr{{\bf 1}_{\set{z_i^{(\ell)}\geq 0}},\, i=1,\ldots, n_\ell}
\]
whose diagonal entries are $0$ or $1$ depending on whether the pre-activation $z_i^{(\ell)}$ of neuron $i$ in layer $\ell$ is positive at our fixed input $x.$ Next, according to Proposition $2$ in \cite{hanin2019products}, the marginal distribution of each $D^{(\ell)}$ is that its diagonal entries are independent $\text{Bernoulli}(1/2)$ random variables. Moreover, we have the following equality in distribution:
\[
J_x\stackrel{d}{=} \eta W^{(L)}\mathcal D^{(L-1)}W^{(L-1)}\cdots \mathcal D^{(1)}W^{(1)}
\]
where $\mathcal D^{(\ell)}$ are independent of each other (and of $W^{(\ell)}$) resampled from the marginal distribution of $D^{(\ell)}$ and $\eta$ is an independent diagonal matrix with independent diagonal entries that are $\pm 1$ with equal probability. In particular, for a fixed unit vector $u\in \R^{n_0}$, we have 
\[
\norm{J_xu} \stackrel{d}{=} || W^{(L)}\mathcal D^{(L-1)}W^{(L-1)}\cdots \mathcal D^{(1)}W^{(1)}u||.
\]
We may rewrite this as
\begin{equation}
|| W^{(L)}\mathcal D^{(L-1)}W^{(L-1)}\cdots \mathcal D^{(2)}W^{(2)}u^{(1)}||~ ||\mathcal D^{(1)}W^{(1)}u||,\label{E:prod-decomp}\end{equation}
for $u^{(1)}:=\frac{\mathcal D^{(1)}W^{(1)}u}{ ||\mathcal D^{(1)}W^{(1)}u||}$, where we interpret the expression \eqref{E:prod-decomp} as equal to $0$ if $\mathcal D^{(1)}$ is the zero matrix. To complete the proof, we need the following standard observation.
\begin{lemma}\label{L:indep}
Suppose $W$ is an $n\times n'$ matrix with i.i.d.~Gaussian entries and $u$ is a random unit vector in $\R^{n'}$ that is independent of $W$ but otherwise has any distribution. Then $Wu$ is independent of $u$ and is equal in distribution to $Wv$ where $v$ is any fixed unit vector in $\R^{n'}$.
\end{lemma}
\begin{proof}
For any \textit{fixed} orthogonal matrix $\mathcal O$, it is a standard fact that $W\mathcal O$ is equal in distribution to $W$. Thus, for any measurable sets $A\subseteq \R^n$ and $B\subseteq \R^{n'}$, since $u,W$ are independent we have
\begin{align*}
    \mathbb P\lr{Wu\in A, u\in B}&= \int_{S^{n'-1}} \mathbb P\lr{Wu_0\in A}d\mathbb P_u(u_0),
\end{align*}
where $\mathbb P_u(u_0)$ is the distribution of $u.$ Fix any $u_0\in S^{n'-1}$ and let $\mathcal O=\mathcal O(u_0)$ be any orthogonal matrix so that $u_0 = \mathcal O e_1$ with $e_1=(1,0,\ldots, 0)$ is the first standard unit vector. Then, since $W\mathcal O$ is equal in distribution to $W$, we have
\[
 \mathbb P\lr{Wu_0\in A} = \mathbb P\lr{We_1\in A},
\]
which is independent of $u_0$. We therefore find 
\begin{align*}
    \mathbb P\lr{Wu\in A, u\in B}&=\mathbb P\lr{We_1\in A}\mathbb P(u\in B),
\end{align*}
as desired. 
\end{proof}

\noindent We are now in a position to complete the proof of Proposition \ref{L:chi-squared}. Combining Lemma \ref{L:indep} with \eqref{E:prod-decomp}, we find that, in distribution $  \norm{J_x u}$ equals
\begin{equation}\label{E:chi-prod-1}
   ||W^{(L)}\mathcal D^{(L-1)}W^{(L-1)}\cdots \mathcal D^{(2)}W^{(2)}u||\, ||\mathcal D^{(1)}W^{(1)}u||. 
\end{equation}
Note that these two terms are independent. Repeating this argument, we obtain that $ \norm{J_x u} $ is equal in distribution to the product
\begin{equation}\label{E:chi-prod}
  ||W^{(L)}u||\,||\mathcal D^{(L-1)}W^{(L-1)}u||\cdots||\mathcal D^{(1)}W^{(1)}u||.
\end{equation}
The terms in this product are independent. To complete the proof note that for $\ell=1,\ldots, L-1$ the number of non-zero entries in $\mathcal D^{(\ell)}$ is a binomial random variable ${\bf n}_\ell$ with $n_\ell$ trials, each with probability of success $1/2$ and that
\[
||\mathcal D^{(\ell)}W^{(\ell)}u||^2
\]
is precisely $2/n_{\ell}$ times a sum of ${\bf n}_\ell$ squares of independent standard Gaussians. Thus, for $\ell=1,\ldots, L-1$, 
\begin{equation}\label{E:chi-gen}
||\mathcal D^{(\ell)}W^{(\ell)}u||^2 \stackrel{d}{=}\frac{2}{n_{\ell-1}} \chi_{{\bf n}_{\ell}}^2.    
\end{equation}
Similarly 
\begin{equation}\label{E:chi-last}
||W^{(L)}u||^2 \stackrel{d}{=}\frac{2}{n_{L-1}} \chi_{ n_L}^2.
\end{equation}
Substituting \eqref{E:chi-gen} and \eqref{E:chi-last} into \eqref{E:chi-prod} completes the proof. 
\end{proof}

\noindent Evaluating the moments of $\len(\mN(M))$ is now a matter of computing the moments of some scaled chi-squared random variables with a random number of degrees of freedom. For instance, recalling that
\[
\E{\chi_k^2} =k
\]
and applying Lemma \ref{L:chi-squared} as well as the tower property of the expectation we find
\begin{align*}
    \E{\norm{J_{x}u}^2}&= \frac{n_L}{n_0} \lr{\prod_{\ell=1}^{L-1} \E{\frac{2}{n_\ell}\chi_{{\bf n}_\ell}^2}}\E{n_L^{-1}\chi_{n_L}^2}\\
    &=\frac{n_L}{n_0} \prod_{\ell=1}^{L-1}\left\{\frac{2}{n_\ell} \E{\E{\chi_{{\bf n}_\ell}^2~|~{\bf n_{\ell}}}}\right\}\\
    &=\frac{n_L}{n_0} \prod_{\ell=1}^{L-1}\left\{\frac{2}{n_\ell} \E{{\bf n}_\ell}\right\}\\
    &=\frac{n_L}{n_0}.
\end{align*}
Substituting this into \eqref{E:len-moments-bound} with $m=2$ yields
\[
\Var\left[\len(\mN(M))\right]\leq \frac{n_L}{n_0},
\]
yielding \eqref{E:1d-var}. To prove \eqref{E:mean-1d}, we need to estimate $\E{\len{\mN(M)}}$. By taking expectations in \eqref{E:gen-len} and using \eqref{E:Gamma-prime}, we find
\[
\E{\len{\mN(M)}} = \int_0^1 \E{\norm{J_{\gamma(t)} \gamma'(t)}}dt = \E{\norm{J_xu}},
\]
where we've used that by Lemma \ref{L:chi-squared}, the distribution of $\norm{J_xu}$ is the same for every $x\in \R^{n_0}$ and every unit vector $u$. Moreover, again using Lemma \ref{L:chi-squared}, we see that $\E{\norm{J_xu}}$ equals
\[
\lr{\frac{n_0}{n_L}}^{1/2}\prod_{\ell=1}^{L-1}\E{\lr{\frac{2}{n_\ell}\chi_{{\bf n}_\ell}^2}^{1/2}} \E{\lr{\frac{1}{n_L}\chi_{n_L}^2}^{1/2}}.
\]
To simplify this, a direct computation shows that
\[
\E{\lr{k^{-1}\chi_k^2}^{1/2}} = \frac{\Gamma\lr{\frac{k+1}{2}}}{\Gamma\lr{\frac{k}{2}}\lr{\frac{k}{2}}^{1/2}},
\]
where $\Gamma(\cdot)$ is the Gamma function. Next, for any positive random variable $X$ with $\E{X}=1$ we have
\begin{align*}
\E{X^{1/2}} &= \E{(1+(X-1))^{1/2}} \\
&= 1 - \frac{1}{8}\Var[X] + O(|X-1|^{3}).
\end{align*}
Recall that
\[
\Var[\chi_k^2] = 2k.
\]
Using this and the law of total variance yields
\begin{align*}
\E{\lr{\frac{2}{n_\ell}\chi_{{\bf n}_\ell}^2}^{1/2}} &= 1-\frac{1}{2n_\ell^2}\Var[\chi_{{\bf n}_\ell}^2]+O(n_\ell^{-2})\\
&=1-\frac{1}{2n_\ell^2}\bigg[\E{\Var[\chi_{{\bf n}_\ell}^2~|~{\bf n}_\ell]}\\
&\quad +\Var[\E{\chi_{{\bf n}_\ell}^2~|~{\bf n}_\ell}]\bigg]+O(n_\ell^{-2}) \\
&=1-\frac{1}{2n_\ell^2}\left[\E{2{\bf n}_\ell}+\Var[{\bf n}_\ell]\right]+O(n_\ell^{-2})\\
&=1-\frac{1}{2n_\ell^2}\left[n_\ell + \frac{n_\ell}{4}\right]+O(n_\ell^{-2})\\
&=1-\frac{5}{8n_\ell}+O(n_\ell^{-2}).
\end{align*}
Thus, we find that $\E{\len(\mN(M))} $ equals
\[\lr{\frac{n_L}{n_0}}^{1/2}\frac{\Gamma\lr{\frac{n_L+1}{2}}}{\Gamma\lr{\frac{n_L}{2}}\lr{\frac{n_L}{2}}^{1/2}}\times \exp\left[-\frac{5}{8}\sum_{\ell=1}^{L-1}n_\ell^{-1} +O\lr{\sum_{\ell=1}^{L-1}n_{\ell}^{-2}}\right],
\]
as claimed. Finally, let us check the higher moment estimates \eqref{E:high-moments-1}. By Lemma \ref{L:chi-squared} we have the following estimate
\begin{align*}
    \norm{J_{x}u}^2&= \frac{n_L}{n_0} \lr{\prod_{\ell=1}^{L-1} \frac{2}{n_\ell}\chi_{{\bf n}_\ell}^2}n_L^{-1}\chi_{n_L}^2\\
    &\leq \frac{n_L}{n_0}\exp\left[\sum_{\ell=1}^{L-1} \left\{\frac{2}{n_\ell}\chi_{{\bf n}_\ell}^2-1\right\}\right]n_L^{-1}\chi_{n_L}^2,
\end{align*}
where we used that $x = x-1 + 1 \leq e^{x-1}.$ For any $m$, we have
\begin{align*}
\E{\lr{\frac{1}{n_L}\chi_{n_L}^2}^m} &= \lr{1+\frac{2}{n_L}}\cdots \lr{1+\frac{2m-2}{n_L}}\\    
&\leq \exp\left[\sum_{j=0}^{m-1}\frac{2j}{n_L}\right]\leq \exp\left[\frac{m^2}{n_L} \right].
\end{align*}
Therefore, $\norm{J_{x}u}^{2m}$ is bounded above by 
\begin{align}
 \label{E:mom-gen-bound}\lr{\frac{n_L}{n_0}}^{m} \exp\left[m\sum_{\ell=1}^{L-1} \left\{\frac{2}{n_\ell}\chi_{{\bf n}_\ell}^2-1\right\}+\frac{m^2}{n_L}\right].
\end{align}
Finally, for any fixed positive integer $n$ we may write
\[\frac{2}{n}\chi_{{\bf n}}^2-1=\frac{2}{n}\sum_{k=1}^n\left\{ \xi_k Z_k^2 -\frac{1}{2}\right\},\]
where $\xi_k$ are independent $\text{Bernoulli}(1/2)$ random variables and $Z_k$ are independent standard Gaussians. A direct computation shows that the centered random variables $ \xi_k Z_k^2 -\frac{1}{2}$ are sub-exponential $\text{SE}(\nu^2,\alpha)$ with some universal parameters $\nu^2,\alpha$. Thus, by the stability of sub-exponential random variables under summation we have
\[
\frac{2}{n_\ell}\chi_{{\bf n}_\ell}^2-1 \in \text{SE}\lr{\frac{4\nu^2}{n_\ell}, \frac{2\alpha}{n_\ell} }.
\]
Again using this property we conclude
\[
\sum_{\ell=1}^{L-1}\frac{2}{n_\ell}\chi_{{\bf n}_\ell}^2-1 \in \text{SE}\lr{4\nu^2\sum_{\ell=1}^{L-1}\frac{1}{n_\ell}, \frac{2\alpha}{n_*}},
\]
where
\[
n_* = \min\set{n_1,\ldots, n_{L-1}}.
\]
Therefore, by \eqref{E:mom-gen-bound}, we find that 
\[
\E{\norm{J_xu}^{2m}}\leq \lr{\frac{n_L}{n_0}}^{m} \E{e^{mY}}\exp\left[\frac{m^2}{n_L}\right],
\]
where $Y\in \text{SE}\lr{4\nu^2\sum_{\ell=1}^{L-1}\frac{1}{n_\ell}, \frac{2\alpha}{n_*}}$. By definition of sub-exponential random variables, we have
\[
\E{e^{mY}}\leq \exp\left[ 4m^2 \nu^2\sum_{\ell=1}^{L-1}\frac{1}{n_\ell}\right],
\]
provided $m< n_*/2\alpha$ for some universal constant $c>0$. This completes the derivation of \eqref{E:high-moments-1} and completes the proof of Theorem \ref{T:1d}. 
\hfill $\square$

\section{Proof of Theorem \ref{T:high-d}}\label{S:higher-d-pf}
The proof of Theorem \ref{T:high-d} follows closely the argument for Theorem \ref{T:1d}. For a given input $x\in \R^{n_0}$, we will continue to write
\[
J_{\mN}(x) := \lr{\partial_{x_j} \mN_i(x)}_{\set{\substack{1\leq j\leq n_0\\ 1\leq i \leq n_L}}}
\]
for the input-output Jacobian of the network function, which exists for Lebesgue almost every $x\in \R^{n_0}$. We will write $\Pi_{T_xM}:\R^{n_0}\gives T_xM$ for the orthogonal projection onto this tangent space. We have
\[\vol_d(\mN(M)) = \int_{M} \lr{\det\lr{\Pi_{T_xM}J_{\mathcal N}(x)^TJ_{\mathcal N}(x)\Pi_{T_xM}}}^{1/2}\vol_d(dx),\]
where $\vol_d$ is the $d-$dimensional Hausdorff measure. Indeed, the integrand measures, at each $x\in M$, the volume of the image of an infinitesimal cube on $M$ of volume $\vol_d(dx)$ centered at $x$ under the map $x\mapsto \mathcal N(x)$. Arguing precisely as in the proof of Lemma \ref{L:len-moments}, we find that for any integer $m\geq 1$
\begin{equation}
\label{E:vol-est}\E{\vol_d(\mN(M))^m}\leq \int_{M} \E{\det\lr{\Pi_{T_xM}J_{\mathcal N}(x)^TJ_{\mathcal N}(x)\Pi_{T_xM}}}^{m/2} \vol_d(dx)  
\end{equation}
Fix $x\in M$ and denote by 
\[
e_1(x),\ldots, e_d(x)
\]
an orthonormal basis of the tangent space of $M$. Then, by the Gram identity, we may write
\begin{equation}
\label{E:Gram}\det\lr{\Pi_{T_xM}J_{\mathcal N}(x)^TJ_{\mathcal N}(x)\Pi_{T_xM}}= \norm{J_{\mN}(x)e_1(x)\wedge \cdots \wedge J_{\mN}(x)e_d(x)}^2,
\end{equation}
where we recall that the wedge product is the anti-symmetrization of the tensor product. Just as in the proof of Theorem \ref{T:1d}, the key observation is that for Gaussian weights we have that $\norm{J_{\mN}(x)e_1(x)\wedge \cdots \wedge J_{\mN}(x)e_d(x)}$ is a product of i.i.d.~random variables. The formal statement is the following:
\begin{proposition}\label{L:chi-squared-general}
For any $x\in \R^{n_0}$ and collection of orthonormal unit vectors $u_1,\ldots, u_d\in \R^{n_0}$, the random variable 
\[
\norm{J_{x}u_1 \wedge \cdots \wedge J_x u_d}^2
\]
is equal in distribution to a product of independent scaled chi-squared random variables:
\[
\lr{\frac{n_L}{n_0}}^d\lr{ \prod_{\ell=1}^{L-1}\prod_{j=1}^{d}\frac{2}{n_\ell} \chi_{{\bf n_\ell}-j+1}^2 } \cdot \prod_{j=1}^{d}\frac{1}{n_L} \chi_{{\bf n_L}-j+1}^2,
\]
where all the terms in the product are independent and for $\ell=1,\ldots, L-1$ we've written ${\bf n_\ell}$ for independent Binomial random variables:
\[
{\bf n_\ell} \stackrel{d}{=} \mathrm{Bin}\lr{n_\ell, 1/2}
\]
with $n_\ell$ trials and success probability $1/2$.
\end{proposition}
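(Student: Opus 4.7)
The plan is to adapt the peeling-off-layers argument from the proof of Proposition~\ref{L:chi-squared} by replacing scalar norms with the $QR$/Bartlett decomposition of rectangular Gaussian matrices, which provides the natural higher-dimensional generalization of ``factor off the norm and leave behind an independent uniform direction.'' Write $U = [u_1 \mid \cdots \mid u_d] \in \R^{n_0 \times d}$. By the Gram identity,
\[
\norm{J_x u_1 \wedge \cdots \wedge J_x u_d}^2 = \det\!\bigl((J_x U)^\top (J_x U)\bigr),
\]
and the factorization $J_x U = W^{(L)} \mathcal{D}^{(L-1)} W^{(L-1)} \cdots \mathcal{D}^{(1)} W^{(1)} U$ lets me process one layer at a time. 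A direct generalization of Lemma~\ref{L:indep}, obtained by extending $U$ to an orthogonal matrix and using $W^{(1)} \mathcal{O} \stackrel{d}{=} W^{(1)}$, gives $W^{(1)} U \stackrel{d}{=} W^{(1)} [e_1 \mid \cdots \mid e_d]$, so that the first-layer output $\mathcal{D}^{(1)} W^{(1)} U$ is, conditionally on $\mathcal{D}^{(1)}$, an $n_1 \times d$ matrix with i.i.d.~$G(0, 2/n_0)$ entries on the ${\bf n_1}$ unmasked rows and zeros elsewhere.

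Next I would perform a $QR$ decomposition $\mathcal{D}^{(1)} W^{(1)} U = Q_1 R_1$, with $Q_1 \in \R^{n_1 \times d}$ having orthonormal columns supported on the unmasked rows and $R_1 \in \R^{d \times d}$ upper triangular. Bartlett's theorem, applied to the nonzero ${\bf n_1} \times d$ Gaussian block, gives the key distributional facts: $Q_1$ and $R_1$ are independent given $\mathcal{D}^{(1)}$, the columns of $Q_1$ form a uniform $d$-frame in the unmasked subspace of $\R^{n_1}$, and $R_{1,jj}^2 \stackrel{d}{=} (2/n_0)\,\chi^2_{{\bf n_1} - j + 1}$ are independent across $j$ (the off-diagonal entries of $R_1$ contribute no factor to $\det(R_1)$). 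Using multilinearity and alternation of the wedge product, write $J_x U = B\, Q_1 R_1$ with $B = W^{(L)} \mathcal{D}^{(L-1)} \cdots W^{(2)}$; then $\bigwedge_{j=1}^d (J_x u_j) = \det(R_1) \cdot \bigwedge_{j=1}^d B\,(Q_1 e_j)$, so that
\[
\norm{\textstyle\bigwedge_{j=1}^d J_x u_j}^2 = \det(R_1)^2 \cdot \norm{\textstyle\bigwedge_{j=1}^d B (Q_1 e_j)}^2, \qquad \det(R_1)^2 = \lr{\frac{2}{n_0}}^d \prod_{j=1}^d \chi^2_{{\bf n_1} - j + 1}.
\]

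I would then iterate: the columns of $Q_1$ are orthonormal in $\R^{n_1}$ and independent of both $R_1$ and layers $\ell \geq 2$, so the generalized Lemma~\ref{L:indep} applied to $W^{(2)}$ lets me replace them by the standard basis of $\R^{n_1}$ without changing the joint law. Repeating the $QR$ peel-off through layers $\ell = 2, \ldots, L-1$ accumulates independent factors of the form $(2/n_{\ell-1})^d \prod_{j=1}^d \chi^2_{{\bf n_\ell} - j + 1}$, and the final layer $W^{(L)}$ (with no mask) contributes $(2/n_{L-1})^d \prod_{j=1}^d \chi^2_{n_L - j + 1}$ with deterministic degrees of freedom $n_L$. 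Multiplying the $L$ independent factors and regrouping the scalar prefactors yields the product in the statement. The main obstacle is bookkeeping the joint independence across layers, which follows from the layerwise independence of the $W^{(\ell)}, \mathcal{D}^{(\ell)}$ combined with the Bartlett independence $Q_\ell \perp R_\ell$ within each layer; a minor edge case is ${\bf n_\ell} < d$, in which $\chi^2_{{\bf n_\ell} - j + 1}$ degenerates for $j > {\bf n_\ell}$, consistent with the $d$-volume vanishing because $J_x(\mathrm{span}\{u_1, \ldots, u_d\})$ then has dimension less than $d$.
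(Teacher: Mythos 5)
Your proof is correct and follows essentially the same route as the paper's: peel off one layer at a time using the orthogonal invariance of Gaussian matrices (the paper's Lemma~\ref{L:indep-gen}), with the Bartlett/QR decomposition supplying the identification of each layer's Gram determinant with $\lr{2/n_{\ell-1}}^d\prod_{j=1}^{d}\chi^2_{{\bf n}_\ell-j+1}$ --- precisely the step that the paper's one-line proof (``identical to Lemma~\ref{L:chi-squared}'') leaves implicit. One bookkeeping caveat, inherited from the paper rather than introduced by you: the scalar prefactor your peeling accumulates is $\prod_{\ell=1}^{L}\lr{2/n_{\ell-1}}^d$, which is $2^d$ times the prefactor in the stated proposition, so the final ``regrouping'' does not literally land on the displayed formula; the identical factor-of-two mismatch already exists between the statement and the proof of the one-dimensional Proposition~\ref{L:chi-squared} (compare \eqref{E:chi-gen}--\eqref{E:chi-last} with the stated product there).
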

\begin{proof}
The proof is identical to that of Lemma \ref{L:chi-squared}. The only difference is that we must invoke the following amplification of Lemma \ref{L:indep}:
\begin{lemma}\label{L:indep-gen}
Let $u_1,\ldots, u_k\in \R^{n'}$ be a collection of orthonormal vectors (i.e. a collection) with any distribution. Let $W$ be an independent $n\times n'$ matrix with i.i.d.~Gaussian entries. Then 
\[
W(u_1\wedge\cdots \wedge u_k) = Wu_1\wedge \cdots \wedge Wu_k
\]
is independent of $u_1\wedge\cdots \wedge u_k$ and is equal in distribution to $Wv_1\wedge \cdots Wv_k$ where $v_1,\ldots, v_k$ is any fixed collection of orthonormal vectors. 
\end{lemma}
\begin{proof}
The proof is identical to that of Lemma \ref{L:indep}, except that we note that for that, given $u_1,\ldots, u_k$, there exists an orthogonal matrix $\mathcal O = \mathcal O(u_1,\ldots, u_k)$ so that
\[
u_j = \mathcal O e_j,\qquad j=1,\ldots, k
\]
where $e_j$ are the standard basis vectors. 
\end{proof}
\end{proof}
\noindent With Lemma \ref{L:chi-squared-general} in hand, we complete the proof of Theorem \ref{T:high-d} as follows. First, note that for any random variable $X$ we have
\[
\E{X}\leq \E{X^2}^{1/2},\qquad \Var[X]\leq \E{X^2}.
\]
Hence, Theorem \ref{T:high-d} will follow once we show that
\[
\E{\vol_d(\mN(M))^2}\leq \lr{\frac{n_L}{n_0}}^d \exp\left[-\binom{d}{2}\sum_{\ell=1}^L n_\ell^{-1}\right].
\]
Combining Proposition \ref{L:chi-squared-general} with \eqref{E:vol-est} and \eqref{E:Gram}, this estimate follows by showing that
\begin{equation}
\E{\norm{J_xu_1\wedge \cdots \wedge J_x u_d}^2}\leq \lr{\frac{n_L}{n_0}}^d \exp\left[-\binom{d}{2}\sum_{\ell=1}^L n_\ell^{-1}\right].
\end{equation}
To check this, recall that
\[
\E{\chi_k^2} = k.
\]
Hence, 
\begin{align*}
\E{\norm{J_{\mN}(x)e_1(x)\wedge \cdots \wedge J_{\mN}(x)e_d(x)}^{2}}
&=\lr{\frac{n_L}{n_0}}^d\lr{ \prod_{\ell=1}^{L-1}\prod_{j=1}^{d}\frac{2}{n_\ell} \E{\chi_{{\bf n_\ell}-j+1}^2 } }\cdot \prod_{j=1}^{d}\frac{1}{n_L} \E{\chi_{{\bf n_L}-j+1}^2}\\
&=\lr{\frac{n_L}{n_0}}^d \prod_{\ell=1}^{L}\prod_{j=1}^{d}\lr{1-\frac{j-1}{n_\ell}} \\
&\leq \lr{\frac{n_L}{n_0}}^d \exp\left[-\binom{d}{2}\sum_{\ell=1}^Ln_\ell^{-1}\right],
\end{align*}
where in the last line we used that $1+x\leq e^x$. This completes the proof. \hfill $\square$

\end{document}